\newtheorem{theorem}{Theorem}
\newtheorem{lemma}[theorem]{Lemma}
\newtheorem{corollary}[theorem]{Corollary}
\newtheorem{definition}[theorem]{Definition}
\newtheorem{remark}[theorem]{Remark}
\newtheorem{example}[theorem]{Example}
\def\eqref#1{equation~\ref{#1}}
\def\Eqref#1{Equation~\ref{#1}}
\def\1{\bm{1}}
\DeclareMathAlphabet{\mathsfit}{\encodingdefault}{\sfdefault}{m}{sl}
\SetMathAlphabet{\mathsfit}{bold}{\encodingdefault}{\sfdefault}{bx}{n}
\def\sN{{\mathbb{N}}}
\def\sR{{\mathbb{R}}}
\numberwithin{theorem}{section}
\title{The magnitude vector of images}
\date{}
\author[1, 2]{Michael F.\ Adamer}
\author[3]{Edward De Brouwer}
\author[1, 2]{Leslie O'Bray}
\author[1, 2, 4]{Bastian~Rieck}
\affil[1]{Department of Biosystems Science and Engineering, ETH Zurich, 4058 Basel, Switzerland}
\affil[2]{SIB Swiss Institute of Bioinformatics, Switzerland}
\affil[3]{ESAT-STADIUS, KU Leuven, 3001 Leuven, Belgium}
\affil[4]{B.R.\ is now with the Institute of AI for Health, Helmholtz Zentrum München, Neuherberg, Germany}
\begin{document}

\maketitle

\begin{abstract}
The magnitude of a finite metric space has recently emerged as a novel invariant quantity, allowing to measure the \emph{effective size} of a metric space. Despite encouraging first results demonstrating the descriptive abilities of the magnitude, such as being able to detect the boundary of a metric space, the potential use cases of magnitude remain under-explored. In this work, we investigate the properties of the magnitude on images, an important data modality in many machine learning applications. By endowing each individual images with its own metric space, we are able to define the concept of magnitude on images and analyse the individual contribution of each pixel with the magnitude vector. In particular, we theoretically show that the previously known properties of boundary detection translate to edge detection abilities in images. Furthermore, we demonstrate practical use cases of magnitude for machine learning applications and propose a novel magnitude \emph{model} that consists of a computationally efficient magnitude computation and a learnable metric. By doing so, we address the computational hurdle that used to make magnitude impractical for many applications and open the way for the adoption of magnitude in machine learning research.
\end{abstract}

\section{Introduction}

The topology community has recently invested much effort in
studying a newly introduced quantity called \emph{magnitude}
\citep{leinster2010magnitude}. While it originates from category theory,
where it can be seen as a generalisation of the Euler characteristic to
metric spaces, the magnitude of a metric space is most intuitively
understood as an attempt to measure the \emph{effective size} of
a metric space \citep{sorensen1948method}. As a descriptive scalar, this quantity extends the set
of other well known descriptors such as the rank, diameter or dimension.
However, unlike those descriptors, the properties and potential use cases of magnitude are still under-explored.
Because the metric space structure of datasets is a natural object of study when it comes to the understanding of fundamental machine learning concepts such as regularization, magnitude appears like a promising and powerful concept in machine learning: next to its abilities to describe the metric space of whole datasets, the magnitude can also be studied at the sample level, by considering each sample as its own metric space.
Following this line of thought, magnitude vectors were introduced as a way to characterise the contribution of each data sample to the overall magnitude of the dataset, such that the sum of the elements of the magnitude vector amounts to the magnitude. This allowed to assess the individual contribution of each data point and their relative connectivity in the whole dataset. Indeed, magnitude vectors have been shown to detect boundaries of metric spaces, with boundary points exhibiting larger contributions to the magnitude~\citep{bunch2021weighting}.

Building upon these recent advances, we study the concept of magnitude of images, an important data modality for a plethora of machine learning applications. We endow each image with its own metric space and explore the properties of the magnitude and the magnitude vector for different choices of metric space structure. We extend the concept of a boundary of a metric space to images and show that it corresponds to edge detection abilities. We thus investigate the potential of magnitude for edge detection architectures and propose a new \emph{magnitude model}. This model consists of a learnable metric on images followed by an efficient approximation of the magnitude on the learnt metric space. Our experiments show that this architecture is on par with existing edge detection approaches and thus represent a first promising use-case for magnitude in machine learning applications. What is more, we compare the magnitude model and the Sobel filter edge detectors from a topological perspective and find that both filters radically differ, with the magnitude model displaying more cycles and connected components. This points towards a potential complementarity of both approaches. 

Our \textbf{contributions} can be summarised as follows:
\begin{itemize}
    \item We formalise the notion of magnitude vectors for images and investigate the impact of the choice of different metrics on the images. We further derive analytic forms for the magnitude vector of special cases of images.
    \item Based on this formalism, we provide a theoretical framework to understand the edge detection capabilities of magnitude vectors and link it to the previously known interpretation of boundary of a metric space. This framework provides a first basis for theoretically motivating the usage of magnitude in machine learning applications.
    \item We propose new efficient approximations for the computation of magnitude vectors on images, therefore facilitating the usage of magnitude in applications.
    \item We introduce a novel magnitude \emph{model} that consists of the combination of learnable metric on images and an computationally efficient approximation of the magnitude vector of the image with the resulting metric space. We evaluate  the ability of this model to perform edge detection images and show that it compares favourably with existing edge detection implementations. We also evaluate the topological properties of the detected edges and find them substantially different to comparable methods, suggesting a complementarity of the magnitude model with previous works.
\end{itemize}

This paper is organised as follows. In Section \ref{sec:theory} we provide a theoretical framework for the behaviour of the magnitude measure on images. In Section \ref{sec:applications}, we consider the practicalities of computing the magnitude vector of images and provide a speedup algorithm. In Section \ref{sec:experiments} we evaluate the approximation methods described in this paper and perform experiments on the edge detection capabilities of the magnitude vector. Our results are summarised in Section \ref{sec:discussion}.


\section{Theorectical Results}\label{sec:theory}

We start by introducing the essential notions of the theory of magnitude, magnitude measures, and magnitude vectors.
We proceed by laying out how an image can be viewed as a compact metric space and derive explicit formulae for the magnitude measure on the space of one-dimensional images. We further show how the magnitude measure for two-dimensional images can be approximated by one-dimensional images.

\subsection{Mathematical Background}\label{subsec:math_back}

We start by formally introducing the notion of a finite and compact metric spaces.
\begin{definition}
A metric space is an ordered pair $(B,d)$, where $B$ is a finite or compact set and $d$ is a metric on $B$. If $B$ is finite, then we denote the cardinality of $B$ by $|B|$, if $B$ is a compact set, then $|B|$ denotes its dimensionality.
\end{definition}
In many applications, the set $B$ is a set of vectors $B\subset \sR^n$ and the metric considered is the $\ell_p$ norm. In order to define the magnitude of such a space we first define the similarity matrix of a metric space.
\begin{definition}
Given a finite metric space $(B,d)$, its similarity matrix is $\zeta_B$ with entries $\zeta_B(i,j) = e^{-d(B_i,B_j)}$ for $B_i,B_j\in B$.
\end{definition}
We are now in a position to define the magnitude vector and the magnitude of a finite vector space.
\begin{definition}
Consider a finite metric space $(B,d)$ of cardinality $|B|=n$ and a metric $d$. Denote its similarity matrix by
$\zeta_B$ with inverse $\zeta_B^{-1}$. The magnitude vector of element
$B_i$ is given by $w_i=\sum_{j=1}^n\zeta_B^{-1}(i,j)$. Moreover, the magnitude of $(B,d)$, $\textit{mag}_B$ is $\sum_{i,j=1}^n\zeta_B^{-1}(i,j) = \sum_{i=1}^n w_i$.
\label{def:MagVec_finite}
\end{definition}
Not every finite metric space has a magnitude. In particular, the
magnitude is not defined when the similarity matrix is not invertible;
the magnitude therefore characterises the structure of a metric space to
some extent.
For compact metric spaces an analogous notion exists.
\begin{definition}[\citep{willerton2014magnitude}]
Consider a metric space $X = (B,d)$ with a compact set $B$ and a metric $d$. A finite, signed Borel measure $\nu$ is called a \emph{magnitude measure} on $X$ if 
the following relation holds for all $\bm{y}\in X$:
$$
\int_{\bm{x}\in X} e^{-d(\bm{x},\bm{y})}d\nu(\bm{x}) = 1.
$$
Furthermore, the magnitude of $X$ is given by
$$
\text{mag}_X = \int_{\bm{x}\in X} d\nu(\bm{x}).
$$
\label{def:MagVec_infinite}
\end{definition}

We see that the compact metric space and the finite metric space cases are analogous where sums are replaced by integrals and weight vectors are replaced by magnitude measures.

\subsection{An image as a compact metric space}

What is an image? Different fields have produced distinct definitions. In computer science, and computer vision, an image is typically conceptualised as an array of pixels, where each pixel consists of a number of channels (usually 1 or 3), therefore corresponding to a tensor.
In this work, we propose an alternative conception and consider a digital image as a set of points in some ambient space, as formalised below.

\begin{definition}[Digital Image]
A digital image is a set of points $\in\sR^{2+n}$ of the form $\{(i,j,c^{(i,j)}_1,\dots,c^{(i,j)}_n)^T\}$, with $i,j\in\sN$ such that $0<i<w$, $0<j<h$ for some integers $w,h$.
\label{def:digital_image}
\end{definition}

In the days of analogue photography, however, this definition was not so clear cut. There, an image was just a \emph{projection} of three-dimensional space onto a two-dimensional object.
We start with this more analogue definition of an image and define it as a surface in some ambient space. For grey-scale images this would be a two-dimensional surface in three-dimensional space. Note that this surface is not infinite as it is bounded by the edges of the image. With this definition, we can view an image with $n$ colour channels as a compact metric space $X$ of dimension $2$ in some $n+2$ dimensional ambient space. Each point on this image is given by $\bm{x}=(x,y,c_1,\dots,c_n)^T = (x,y,\bm{c})^T\in X$. We also define $n$ maps from the points $(x,y)^T$ to the channels $c_i$, given by $c_i = \phi_i(x,y)$, and denote the vector of $n$ maps as $\Phi$ and the vector of channels as $\bm{c}$. We now define the domain of an an image.

\begin{definition}[Domain]
The \emph{domain} $D$ of an image is the region $0\leq x_1\leq w$, $0\leq x_2\leq h$, where $w$ and $h$ are the width and the height of an image. A point in the domain is denoted by $x\in D$.
\end{definition}

Note that the domain contains all its boundary points. We can now define an analogue image.

\begin{definition}[Analogue Image]
An analogue image is a continuous map on a domain $D$, $\Phi: \sR^2 \to \sR^n$ such that $\Phi(x_1,x_2) = \Phi(x) = \bm{c}$.
\end{definition}

In between the digital and the analogue image, there exists a third way of describing images, which we call \emph{digitised images}. In a digitised image, we still have pixels, however, these have a finite area in $\sR^2$, i.e. each pixel has an area $\Delta x_1 \Delta x_2$ with a constant channel value $\bm{c}$. Therefore, we do not consider continuous surfaces but step functions.

\begin{remark}
It has been pointed out that step functions may not be an appropriate model for images \citep{malik1992finding}. However, for simplicity, we still consider step function images.
\end{remark}

\begin{definition}[Digitised Image]
A digitised image is a step function on a domain $D$, $\Phi_s: \sR^2 \to \sR^n$ such that $\Phi_s(x_1,x_2) = \Phi_s(x) = \bm{c}$.
\end{definition}

Note that digitised images are technically not compact metric spaces, however, as we will show, we can rephrase the problem to alleviate this issue.

\begin{remark}[Notation]
As introduced above, we denote a vector in the \emph{domain} as $x$ with elements $x_i$. In contrast, we use \textbf{bold font} to denote vectors on the image consisting of a point on the domain and all its pixel brightnesses. That is, $\mathbf{x} = (x,\Phi(x))^T$.
\end{remark}

\subsection{The magnitude of images}

The most straightforward approach to define the magnitude of an image is to choose a metric $d$ and calculate the magnitude vector directly on the set of points defined by a digital image as defined in Definition \ref{def:digital_image} using Definition \ref{def:MagVec_finite}. However, this approach is unsatisfactory for both computational and theoretical reasons:
\begin{enumerate}[(i)]
    \item Computationally, this requires the inversion of a $(\# pixels \times \# pixels)$-matrix, for which the computational cost grows cubically with the number of pixels. 
    \item Theoretically, this makes the tracing of how the individual pixel weights are formed very challenging. Indeed, the magnitude computation is global and all pixels can potentially  contribute to the magnitude weight of each pixel.
\end{enumerate}

To address the above limitations, we propose an alternative approach based on continuous images~(analogue and digitised). We then show experimentally that this new alternative accurately recovers the digital scenario, however, with some numerical aberrations.

For each pair of points in the image domain $D$, $x,x'\in [0,w]\times[0,h]$, we can express the corresponding points on the image (the $\sR^{2+n}$ ambient space) by $\bm{x} = (x,\Phi(x))^T$ and $\bm{x}' = (x',\Phi(x'))^T$, or $\bm{x} = (x,\Phi_s(x))^T$ and $\bm{x}' = (x',\Phi_s(x'))^T$, respectively for digitised images. 

By choosing a metric $d$ for the set of tuples $\mathbb{X} = \{\mathbf{x}=(x,\Phi(x)): x\in D\}$, we can view each image as a compact metric space $X = (\mathbb{X},d)$. A natural starting point for our derivations is the definition of the magnitude measure $\nu$ that satisfies:
\begin{equation}
\int_{\bm{x}\in \mathbb{X}} e^{-d(\bm{x},\bm{y})}d\nu(\bm{x}) = 1,
\label{eq:starting_point}
\end{equation}
for all $\bm{y}\in X$.
Substituting with the definition of $\mathbf{x}$, we obtain an explicit integral equation for the magnitude measure of an analogue image:
\begin{equation}
\int_{x\in D} e^{-d((x,\Phi(x))^T,(x',\Phi(x'))^T)}d\nu((x,\Phi(x))^T) = 1,
\label{eq:analogue_image_general}
\end{equation}
for all $x'\in D$. Unfortunately, \Eqref{eq:analogue_image_general} is in general analytically intractable. However, for specific cases, we can obtain an explicit computation, as we venture to show in this work.

\begin{remark}
Note that by using the map $\Phi$, we can rephrase the problem of integration over a (possibly non-compact) space in $\sR^{2+n}$ into an integration over a bounded plane in $\sR^2$ (which is always compact) and a modified metric.
\end{remark}

\subsubsection{1D images}



One-dimensional images are images with a one-dimensional domain $D=[0,w]$. The simplest such image is a constant line segment.
We start by restating the magnitude measure of a line segment as proven in \citep{willerton2014magnitude}.

\begin{theorem}[{\citep[Theorem 2]{willerton2014magnitude}}]
Let $\mu$ be the Lebesgue measure of a line segment $L_{[a,b]}$, $[a,b]$, and let $\delta_a$ and $\delta_b$ be the Dirac measures at the respective end points. Then the magnitude measure $\nu$ on $L_{[a,b]}$ is given by $\nu = \tfrac{1}{2}(\mu + \delta_a + \delta_b)$. Hence the magnitude is simply $$\text{mag}_{L_{[a,b]}} = 1 + \frac{a-b}{2}.$$
\label{thm:mag_line}
\end{theorem}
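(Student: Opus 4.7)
The theorem asserts both the form of the magnitude measure and the resulting magnitude, so my plan has two halves: first verify that the candidate measure satisfies the defining integral equation from Definition~\ref{def:MagVec_infinite}, then compute its total mass.

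The plan is to substitute $\nu=\tfrac{1}{2}(\mu+\delta_a+\delta_b)$ into
\[
\int_{x\in[a,b]} e^{-d(x,y)}\,d\nu(x)=1
\]
with $d(x,y)=|x-y|$, and show the left-hand side equals $1$ for every $y\in[a,b]$. Linearity of the integral in $\nu$ lets me treat the three pieces separately. The two Dirac contributions evaluate immediately to $\tfrac{1}{2}e^{-(y-a)}$ and $\tfrac{1}{2}e^{-(b-y)}$ (using $a\le y\le b$ to drop the absolute values). For the Lebesgue piece, I would split the integral at $x=y$ to remove the absolute value and compute
\[
\tfrac{1}{2}\int_a^b e^{-|x-y|}\,dx \;=\; \tfrac{1}{2}\!\int_a^{y}\! e^{-(y-x)}\,dx \;+\; \tfrac{1}{2}\!\int_{y}^{b}\! e^{-(x-y)}\,dx,
\]
each being an elementary exponential integral. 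The resulting boundary exponentials exactly cancel the two Dirac contributions, leaving $1$ independently of $y$.

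Once the defining equation is verified, the magnitude computation is routine: integrating $d\nu$ over $[a,b]$ gives
\[
\text{mag}_{L_{[a,b]}} \;=\; \tfrac{1}{2}\bigl(\mu([a,b])+\delta_a([a,b])+\delta_b([a,b])\bigr) \;=\; \tfrac{1}{2}\bigl((b-a)+1+1\bigr) \;=\; 1+\tfrac{b-a}{2},
\]
matching the stated formula (up to a sign convention on $a-b$ vs $b-a$).

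I do not foresee a genuine obstacle here; the argument is essentially a verification, and the only subtlety is being careful that the case analysis $x\le y$ vs $x\ge y$ covers the endpoints $y=a$ and $y=b$ correctly, which it does since the two sub-integrals degenerate continuously. The conceptually interesting point, worth flagging in the write-up, is why the Dirac masses at the boundary are needed at all: without them, the Lebesgue-only integral falls short of $1$ precisely by $\tfrac{1}{2}(e^{-(y-a)}+e^{-(b-y)})$, a deficit that is maximal at the endpoints and vanishes in the interior limit, which is exactly what a $\tfrac{1}{2}\delta_a+\tfrac{1}{2}\delta_b$ correction compensates. This will also be the intuition driving the image analogues developed later in the section.
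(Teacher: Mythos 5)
Your verification is correct and is exactly what the paper means by its one-line proof (``follows from direct integration''): substitute the candidate measure, split the Lebesgue integral at $x=y$, and observe that the boundary exponentials cancel the Dirac contributions. Your closing remark about the sign is also right --- the stated $1+\tfrac{a-b}{2}$ is a typo for $1+\tfrac{b-a}{2}$, consistent with the total mass $\tfrac{1}{2}\bigl((b-a)+2\bigr)$ you compute.
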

\begin{proof}
The proof follows from direct integration.
\end{proof}

Equipped with the results on a line segment, we now focus on the meaning of the\newline metric $d((x,\Phi(x))^T,(x',\Phi(x'))^T).$ 
Recall that the domain we are interested in for 1D images are always line segments (which corresponds to a single row of pixels). The brightness and colour channels of the images modulate the typical metrics defined on $D$.
We thus turn to the question of exactly \emph{how} this metric is modified.

In the case of an analogue single channel image, the image surface is diffeomorphic to a bounded plane or, in other words, it is a warped plane. Therefore, two points on the plane are connected via the geodesic distance, or curve length in the case of a one-dimensional image. The influence of choosing a metric, e.g. an $\ell_1$ metric, on the domain is choosing how exactly this distance is calculated. In the case of a one-dimensional domain this is
\begin{subequations}
\begin{align}
    d((x_1,\Phi(x_1))^T,(x_1',\Phi(x_1'))^T) &= \left|\int_{x_1}^{x_1'}\sqrt{1 + \left(\tfrac{d\Phi(y)}{dy}\right)^2}dy\right| \qquad\text{for $\ell_2$,}\\
    d((x_1,\Phi(x_1))^T,(x_1',\Phi(x_1'))^T) &= |x_1-x_1'| + \left|\int_{x_1}^{x_1'}\left|\tfrac{d\Phi(y)}{dy}\right| dy\right| \qquad\text{for $\ell_1$.}\label{eq:modified_metrics_l_1}
\end{align}
\label{eq:modified_metrics}
\end{subequations}
In the case of digitised images, we encounter discontinuous step functions. In this case, the curve length is the usual distance on a plane plus the absolute height of the steps between two points as can be seen from direct integration of \eqref{eq:modified_metrics}.

\begin{example}
Consider the $\ell_1$ metric, on an analogue image on a domain $D = [0,w]$. 
We can derive the magnitude measure of
\begin{enumerate}[(i)]
    \item the constant image $\Phi(x_1) = \text{const.}$,
    \item the single channel ($n=1$) line image $\Phi(x_1) = \alpha x_1 + \text{const}$, where $\alpha\in\sR$.
\end{enumerate}
Note that the constant image is a special case of the line image with $\alpha=0$. Substituting the line image into equation \eqref{eq:modified_metrics_l_1}, we obtain
$$
 d((x_1,\Phi(x_1))^T,(x_1',\Phi(x_1'))^T) = |x_1-x_1'| + |\alpha||x_1-x_1'|.
$$
Therefore, equation \eqref{eq:analogue_image_general} becomes
\begin{equation*}
\int_{0}^w e^{-(|\alpha|+1)|x_1-x_1'|}d\nu(\bm{x}) = 1,
\end{equation*}
for all $x'\in [0,w]$.
Using Theorem \ref{thm:mag_line}, we conclude that the magnitude measure for constant or line images is given by $\nu(X) = \tfrac{1}{2}(|\alpha|+1)(\mu(x_1)+\delta_0+\delta_w)$.
\label{ex:line_image}
\end{example}

For more complex analogue images, or metrics other than $\ell_1$, finding a magnitude measure becomes analytically intractable. Therefore, we turn our focus to digitised images. We begin by considering a one-dimensional image, $D = [0,w]$, with a single channel and two pixels. Note, that this scenario corresponds to the usual step function where the step is located at $w/2$.

\begin{lemma}
Let $\phi_s(x_1) = \gamma H(x_1-w/2) + c$, where $H(\cdot)$ is the Heaviside function with convention $H(0) = 0$, constants $w,c\in\sR_+$, and $\gamma\in\sR$. The magnitude measure of the metric space defined by $\phi(x_1)$ in the domain $D = [0,w]$ with $\ell_1$ metric is given by
$$
\nu(x_1) = \frac{1}{2}\left(\mu + \delta_0 + \delta_w + (1-e^{-|\gamma|})\delta_{w/2}\right).
$$
\label{lem:two_pixel_1d}
\end{lemma}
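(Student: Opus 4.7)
The plan is to verify directly from Definition~\ref{def:MagVec_infinite} that the proposed $\nu$ satisfies
\[
\int_0^w e^{-d((x_1, \phi_s(x_1))^T,\, (x_1', \phi_s(x_1'))^T)} \, d\nu(x_1) = 1
\]
for every $x_1' \in [0, w]$. First, using \eqref{eq:modified_metrics_l_1} and interpreting $\phi_s' = \gamma\,\delta_{w/2}$ distributionally, the modified metric collapses to
\[
d(x_1, x_1') = |x_1 - x_1'| + |\gamma|\cdot \mathbf{1}\bigl\{w/2 \in (\min(x_1, x_1'),\, \max(x_1, x_1'))\bigr\},
\]
so the step penalty $|\gamma|$ is charged exactly when $x_1$ and $x_1'$ lie strictly on opposite sides of the jump.

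By the left-right symmetry of this metric, it suffices to handle $x_1' \in [0, w/2]$; the case $x_1' \in (w/2, w]$ then follows from the reflection $x_1 \mapsto w - x_1$, under which both $\nu$ and $d$ are invariant. I fix $x_1' \in [0, w/2)$ and split the Lebesgue part as
\[
\tfrac{1}{2}\int_0^{w/2} e^{-|x_1 - x_1'|}\, dx_1 \;+\; \tfrac{1}{2}e^{-|\gamma|}\int_{w/2}^{w} e^{-(x_1 - x_1')}\, dx_1,
\]
where the prefactor $e^{-|\gamma|}$ on the second piece is exactly the step penalty. Both pieces are elementary and evaluate to expressions in $e^{-x_1'}$, $e^{-(w/2 - x_1')}$, and $e^{-(w - x_1')}$.

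The Dirac contributions $\tfrac12\delta_0$ and $\tfrac12\delta_w$ play precisely the role they do in Theorem~\ref{thm:mag_line}: they provide the endpoint corrections $\tfrac12 e^{-x_1'}$ and $\tfrac12 e^{-(w - x_1') - |\gamma|}$ that cancel the residuals of the two Lebesgue integrals. After these cancellations the only surviving deviation from $1$ is a term of the form $-\tfrac12(1 - e^{-|\gamma|})\, e^{-(w/2 - x_1')}$, and this is precisely absorbed by the coefficient $\tfrac12(1 - e^{-|\gamma|})$ attached to $\delta_{w/2}$, whose distance to $x_1'$ is $w/2 - x_1'$ (the step is \emph{not} strictly between, since one endpoint already coincides with $w/2$). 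The border case $x_1' = w/2$ is treated separately and reduces to the trivial identity $\tfrac12(1 + e^{-|\gamma|})\bigl[(1 - e^{-w/2}) + e^{-w/2}\bigr] + \tfrac12(1 - e^{-|\gamma|}) = 1$.

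The main subtlety is the one-sided convention at the jump: the Dirac mass at $w/2$ does not incur the step penalty in the distance to any other point, because $w/2$ sits \emph{at} the jump rather than strictly across it. Getting this convention consistent with the distributional interpretation of $\phi_s'$ in \eqref{eq:modified_metrics_l_1} is exactly what makes the coefficient come out as $\tfrac12(1 - e^{-|\gamma|})$, and it also ensures that as $\gamma \to 0$ the measure reduces continuously to the line-segment measure of Theorem~\ref{thm:mag_line} on $[0,w]$.
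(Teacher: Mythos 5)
Your overall strategy is the same as the paper's: verify the defining relation of Definition~\ref{def:MagVec_infinite} directly, splitting the domain at $x_1'$ and $w/2$ and letting the Dirac masses at $0$, $w$, and $w/2$ absorb the residuals of the Lebesgue integrals. Your computation for $x_1'<w/2$ is correct and coincides with the paper's Case~1. The only structural difference is that the paper computes the case $x_1'>w/2$ explicitly (and records the reflection trick only as a remark afterwards), whereas you delegate that case entirely to the reflection argument.

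The gap sits exactly at the point you yourself flag as ``the main subtlety,'' and your argument does not resolve it. You adopt the convention that the penalty $|\gamma|$ is charged iff $w/2$ lies \emph{strictly} between $x_1$ and $x_1'$, so that the atom at $w/2$ never pays the penalty; this is what makes $d$ reflection-invariant. But that is not the metric induced by $\phi_s$ with $H(0)=0$, which glues the point $w/2$ to the left pixel so that $d(w/2,x_1')=(x_1'-w/2)+|\gamma|$ for $x_1'>w/2$; indeed your convention is not even a metric, since $d(w/2-\epsilon,\,w/2+\epsilon)=2\epsilon+|\gamma|$ while $d(w/2-\epsilon,\,w/2)+d(w/2,\,w/2+\epsilon)=2\epsilon$. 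Moreover, your border-case identity at $x_1'=w/2$ attaches the factor $e^{-|\gamma|}$ to the whole right half, i.e.\ it silently reverts to the $H(0)=0$ convention; under your stated convention that integral would be $1+\tfrac12(1-e^{-|\gamma|})\neq 1$. Conversely, if you keep $H(0)=0$ consistently (so the border case works), the reflection symmetry is lost and $x_1'>w/2$ must be checked by hand: the atom at $w/2$ then acquires an extra factor $e^{-|\gamma|}$ and the integral evaluates to $1-\tfrac12\bigl(1-e^{-|\gamma|}\bigr)^2e^{-(x_1'-w/2)}$ rather than $1$. So no single consistent convention closes all three cases at once. (The paper's own Case~2 has the same defect: its stated value of $I_2$ is obtained by evaluating the integrand at the atom with $1-H(0)=0$, contradicting the convention $H(0)=0$ used in Case~1.) To make the argument watertight you would need either to work with a genuinely compact model of the step (e.g.\ adjoining the point $(w/2,\,c+\gamma)$, which changes the weight placed at the jump) or to justify why the one-sided choice at a $\nu$-atom can be made asymmetrically; as written, the case you dismiss by symmetry is precisely the one where the verification fails.
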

\begin{proof}
Calculating the curve length from \eqref{eq:modified_metrics_l_1} and substituting the magnitude measure into \eqref{eq:starting_point}, we obtain
$$
I = \int_0^w e^{-|x_1-x_1'|-|\gamma||H(x_1-w/2)-H(x_1'-w/2)|} \underbrace{d\left[\frac{1}{2}\left(\mu + \delta_0 + \delta_w + (1-e^{-|\gamma|})\delta_{w/2}\right)\right]}_{d\nu(x_1)}.
$$
We now distinguish two cases, $x_1' \leq w/2$ and $x_1' > w/2$.

\paragraph{Case 1 ($x_1' \leq w/2$):} The integral simplifies to
$$
\int_0^w e^{-|x_1-x_1'|-|\gamma|H(x_1-w/2)} d\nu(x_1).
$$
We now split the interval $[0,w]$ into three parts, $[0,x_1'),[x_1',w/2],(w/2,w]$ and compute the three integrals $I + I_1 + I_2 + I_3$. From the first interval, we obtain $I_1 = \tfrac{1}{2}$.The second interval gives $I_2 = \tfrac{1}{2}(1-e^{-|\gamma|-(w/2 -x_1')}$, and the third interval integrates to $I_3 = \tfrac{1}{2}e^{-|\gamma|-(w/2 -x_1')}$. We then have $I_1 + I_2 + I_3 = I = 1$.

\paragraph{Case 2 ($x_1' > w/2$):} The integral simplifies to
$$
\int_0^w e^{-|x_1-x_1'|-|\gamma|(1-H(x_1-w/2))} d\nu(x_1).
$$
Again, we divide the domain into three parts, $[0,w/2),[w/2,x_1'],(x_1',w]$ and compute the integrals $I_1,I_2,I_3$. This gives $I_1 = \tfrac{1}{2}e^{-|\gamma|}e^{(w/2-x_1')}$, $I_2 = \tfrac{1}{2}(1-e^{-|\gamma|}e^{(w/2 -x_1')})$, and $I_3 = \tfrac{1}{2}$. We also obtain $I_1+I+2+I_3=I=1$
\end{proof}

\begin{remark}[Reflection property]
Note that the second case is also covered by reflecting the function about the $y$-axis and integrating from $w$ to $0$, i.e. (1) let $z = -x$, (2) let $\int_0^{-w} \to -\int_{-w}^0$ and (3) shift $z \to z+w$. This leaves the integral $I$ invariant and we refer to this property as the \emph{reflection property}.
\label{rem:reflection_property}
\end{remark}

We can extend Lemma \ref{lem:two_pixel_1d} to many pixels in a one-dimensional digitised image via induction.

\begin{theorem}
Let $\phi_s(x_1) = \sum_{i=1}^{m-1}\gamma_i H(x_1-(i/m)w) + c$, where $H(\cdot)$ is the Heaviside function with convention $H(0) = 0$, $w,c\in\sR_+$, and $\gamma_i\in\sR$. The magnitude measure of the metric space induced by $\phi(x_1)$ in the domain $D = [0,w]$ with $\ell_1$ metric is given by
$$
\nu(x_1) = \frac{1}{2}\left(\mu + \delta_0 + \delta_w + \sum_{i=1}^{m-1}(1-e^{-|\gamma_i|})\delta_{(i/m)w}\right).
$$
\label{thm:1d_images}
\end{theorem}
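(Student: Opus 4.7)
My plan is to verify the defining equation \eqref{eq:starting_point} for the proposed measure $\nu$ by direct integration, extending the case analysis in the proof of Lemma \ref{lem:two_pixel_1d} from two subintervals to $m$ subintervals; equivalently, this can be phrased as an induction on $m$ with the Lemma as the base case $m=2$.

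I would first write out the induced metric explicitly. Because $\phi_s$ is piecewise constant, the curve-length formula \eqref{eq:modified_metrics_l_1} reduces to
$$d(x_1, x_1') = |x_1 - x_1'| + \sum_{i\,:\,p_i \text{ lies between } x_1 \text{ and } x_1'} |\gamma_i|, \qquad p_i := (i/m)w,$$
since each traversal of a discontinuity contributes exactly its absolute jump height to $\int |\phi_s'|$. Fixing $x_1' \in [0, w]$, let $k \in \{0, \ldots, m-1\}$ be the unique index with $x_1' \in [p_k, p_{k+1}]$, under the conventions $p_0 := 0$ and $p_m := w$. By the reflection property (Remark \ref{rem:reflection_property}), which extends verbatim to multiple steps, it suffices to handle a single one-sided computation.

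Next, I would partition $[0,w]$ into the subintervals $[p_j, p_{j+1}]$, further subdividing the one containing $x_1'$ at $x_1'$ itself. On each such piece the step-offset contribution in the metric is constant in $x_1$, so the continuous part of $\int e^{-d}\,d\nu$ integrates to an elementary exponential, and each Dirac mass $\tfrac{1}{2}(1 - e^{-|\gamma_j|})\delta_{p_j}$ contributes an additional exponential term at the matching boundary. The key algebraic observation is that the coefficient $(1 - e^{-|\gamma_j|})/2$ is tuned precisely so that, when added to the integral over the adjacent subinterval, it converts the multiplier $e^{-(\text{offset})}$ into the multiplier appropriate for the next subinterval. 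Consequently the contributions telescope across $j = 0, 1, \ldots, m-1$ and sum, together with the contributions from $\delta_0$ and $\delta_w$, to $\tfrac{1}{2} + \tfrac{1}{2} = 1$, exactly as in Lemma \ref{lem:two_pixel_1d}.

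The main obstacle is purely combinatorial bookkeeping: one must track which subset of steps is crossed for each $(x_1, x_1')$ pair, and verify that the telescoping works uniformly on both the "left" side ($j < k$) and the "right" side ($j > k$) of $x_1'$. Once the cancellation pattern from the $m=2$ case is recognised as a \emph{local} phenomenon occurring at each individual step, no new phenomena arise, and the argument reduces to an iterated version of the three-interval computation already carried out in the Lemma.
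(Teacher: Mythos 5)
Your proposal is correct and follows essentially the same route as the paper's proof: both verify the defining relation of Definition \ref{def:MagVec_infinite} by partitioning $[0,w]$ at the step loci and at $x_1'$, computing the resulting elementary exponential integrals, and observing that they cancel pairwise to give $\tfrac12+\tfrac12=1$. Your explicit telescoping argument for general $m$ is in fact a cleaner formalisation of the step the paper disposes of with ``follows by induction'' after working out the $m=3$ case by hand.
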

\begin{proof}
The proof follows the proof of Lemma \ref{lem:two_pixel_1d}. We first consider three pixels with the curve length metric of \eqref{eq:modified_metrics},
\begin{align*}
I=&\int_0^w e^{-|x_1-x_1'|-|\gamma_1(H(x_1-w/3)-H(x_1'-w/3)| + |\gamma_2(H(x_1-2w/3)-H(x_1'-2w/3))|} \times\\
&\underbrace{d\left[\frac{1}{2}\left(\mu + \delta_0 + \delta_w + (1-e^{-|\gamma_1|})\delta_{w/3}\right)+(1-e^{-|\gamma_2|})\delta_{2w/3}\right]}_{d\nu(x_1)}.    
\end{align*}
Now, we consider three cases $x_1'\leq w/3$, $w/3 < x_1'\leq 2w/3$, and $2w/3 < x_1'\leq w$.

\paragraph{Case 1 ($x_1' \leq w/3$):} We split the domain into four parts, $[0,x_1'),[x_1',w/3],(w/3,2w/3]$, and $(2w/3,w]$ and compute the four resulting integrals $I_1, I_2, I_3$, and $I_4$. By direct integration, the integrals evaluate to $I_1 = \tfrac{1}{2}$, $I_2 = \tfrac{1}{2}(1-e^{-|\gamma_1|-(w/3 -x_1')})$, $I_3 = \tfrac{1}{2}(e^{-|\gamma_1|-(w/3-x_1')} - e^{-|\gamma_1|-|\gamma_2|-(2w/3 - x_1')})$, and $I_4 = \tfrac{1}{2}e^{-|\gamma_1|-|\gamma_2|-(2w/3 - x_1')}$. Summing the integrals, we obtain $I_1+I_2+I_3+I_4 = I = 1$.

\paragraph{Case 2 ($w/3 < x_1'\leq 2w/3$):} We divide the domain into the parts, $[0,w/3),[w/3,x_1'],(x_1',2w/3)$, and $[2w/3,w]$ and compute the four resulting integrals $I_1, I_2, I_3$, and $I_4$. These evaluate to $I_1 = \tfrac{1}{2}e^{-|\gamma_1|-(x_1'-w/3)}$, $I_2 = \tfrac{1}{2}(1-e^{-|\gamma_1|-(x_1'-w/3)})$, $I_3 = \tfrac{1}{2}(1-e^{-(2w/3-x_1')})$, and $I_4 = \tfrac{1}{2}e^{-(2w/3 - x_1')}$. Summing the integrals, we obtain $I_1+I_2+I_3+I_4 = I = 1$.

\paragraph{Case 3 ($x_1' > 2w/3$):} Similarly, we consider the integrals on $[0,w/3),[w/3,2w/3),[2w/3,x')$, and $[x',w]$. The four integrals evaluate to 
\begin{align*}
    I_1 &= \tfrac{1}{2}e^{-|\gamma_1|-|\gamma_2| - (x_1'-w/3)},\\
    I_2 &= \tfrac{1}{2}(e^{-(x_1'-2w/3)-|\gamma_2|}-e^{-|\gamma_1|-|\gamma_2|-(x_1'-w/3)}),\\
    I_3 &= \tfrac{1}{2}(1-e^{-(x'-2w/3)-|\gamma_2|}),\\
    \intertext{and}
    I_4 &= \tfrac{1}{2}.
\end{align*}
Again, we obtain $I_1+I_2+I_3+I_4 = I = 1$.

The generalisation to $m$ step functions follows by induction.
\end{proof}

\begin{remark}[Boundary effects]
Note that Theorem \ref{thm:1d_images} implies that for any image with $\ell_1$ metric there are always ``boundary effects'' meaning that irrespective of the actual value of the image, the largest values always occur at the domain boundary. In practical computer vision applications this is not an issue, as we can use the common technique of padding the image for magnitude calculations and cropping the boundary afterwards.  
\end{remark}

\begin{corollary}[Multi-channel 1D-images]
Theorem \ref{thm:1d_images} can be applied to multi-channel one-dimensional images,
\begin{align*}
    \Phi(x_1)
    =
    \begin{pmatrix}
    \phi_{s,1}(x_1)\\
    \vdots\\
    \phi_{s,n}(x_1)
    \end{pmatrix}
    = 
    \begin{pmatrix}
    \sum_{i=1}^{m-1}\gamma_{i,1} H(x_1-(i/m)w) + c_1\\
    \vdots\\
    \sum_{i=1}^{m-1}\gamma_{i,n} H(x_1-(i/m)w) + c_n\\
    \end{pmatrix},
\end{align*}
with a magnitude measure
$$
\nu(x_1) = \frac{1}{2}\left(\mu + \delta_0 + \delta_w + \sum_{i=1}^{m-1}\left(1-e^{-\sum_{j=1}^n|\gamma_{i,j}|}\right)\delta_{(i/m)w}\right).
$$
\label{cor:colour_images}
\end{corollary}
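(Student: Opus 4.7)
The plan is to reduce the multi-channel setting to the single-channel setting of Theorem \ref{thm:1d_images} by absorbing all channel information into an effective per-step magnitude, and then to invoke Theorem \ref{thm:1d_images} directly.

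First, I would compute the $\ell_1$ curve-length metric induced by the multi-channel step function $\Phi = (\phi_{s,1},\dots,\phi_{s,n})^T$. Following \eqref{eq:modified_metrics_l_1}, the natural extension of $|d\Phi/dy|$ to several channels under the $\ell_1$ metric is $\sum_{j=1}^n |d\phi_{s,j}/dy|$, so that
\begin{equation*}
d\bigl((x_1,\Phi(x_1))^T,(x_1',\Phi(x_1'))^T\bigr)
= |x_1-x_1'| + \sum_{j=1}^n \left|\int_{x_1}^{x_1'}\left|\tfrac{d\phi_{s,j}(y)}{dy}\right|\,dy\right|.
\end{equation*}
Since each $\phi_{s,j}$ is a sum of Heaviside jumps of magnitudes $\gamma_{i,j}$ placed at $(i/m)w$, the inner integral over an interval counts exactly those jumps contained in it. Interchanging the finite sums over $i$ and $j$, the metric collapses to
\begin{equation*}
d\bigl((x_1,\Phi(x_1))^T,(x_1',\Phi(x_1'))^T\bigr)
= |x_1-x_1'| + \sum_{i=1}^{m-1}\tilde{\gamma}_i\,\bigl|H(x_1-(i/m)w) - H(x_1'-(i/m)w)\bigr|,
\end{equation*}
where $\tilde{\gamma}_i := \sum_{j=1}^n |\gamma_{i,j}|$.

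This is structurally identical to the single-channel metric appearing in the proof of Theorem \ref{thm:1d_images}, with step heights $\tilde{\gamma}_i \ge 0$ in place of $|\gamma_i|$. Since the proof of Theorem \ref{thm:1d_images} only used the absolute values $|\gamma_i|$ in the exponents of the defining integrals, replacing $|\gamma_i|$ by $\tilde{\gamma}_i$ throughout yields the claimed magnitude measure
\begin{equation*}
\nu(x_1) = \frac{1}{2}\left(\mu + \delta_0 + \delta_w + \sum_{i=1}^{m-1}\bigl(1-e^{-\tilde{\gamma}_i}\bigr)\delta_{(i/m)w}\right),
\end{equation*}
which, upon substituting $\tilde{\gamma}_i = \sum_{j=1}^n |\gamma_{i,j}|$, is exactly the statement of the corollary.

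The only real issue is the convention for the multi-channel $\ell_1$ curve length, i.e.\ the choice to sum $|d\phi_{s,j}/dy|$ across channels before integrating; once this convention is made explicit (which matches the single-channel formulation in \eqref{eq:modified_metrics_l_1} extended additively over channels), the rest of the argument is a verbatim repetition of the cases in the proof of Theorem \ref{thm:1d_images}. No new case analysis is needed, so I would write the proof as a short remark followed by "apply Theorem \ref{thm:1d_images}".
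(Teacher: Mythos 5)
Your proposal is correct and matches the paper's (implicit) argument: the corollary is stated without proof precisely because, once the multi-channel $\ell_1$ curve length is seen to collapse the $n$ channel jumps at $(i/m)w$ into a single effective step height $\tilde{\gamma}_i=\sum_{j=1}^n|\gamma_{i,j}|$, the result is Theorem \ref{thm:1d_images} verbatim with $|\gamma_i|$ replaced by $\tilde{\gamma}_i$. Your explicit justification of the additive-over-channels convention for the metric is a worthwhile addition, since that is the only step the paper leaves unstated.
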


An illustration of the results of this section are provided in Figure \ref{fig:theoretical}. To obtain the numerical magnitude we treated the step function as a one-dimensional digital image and proceeded via matrix inversion. The theoretical magnitude is calculated from Corollary \ref{cor:colour_images}. It can be seen that our results are generally in good agreement with numerical calculations. There are, however, some minor differences in the numerical results and our theoretically obtained magnitude measure. We attribute this to two factors, namely numerical inaccuracies in the matrix inversion and discretisation effects. Discretisation effects occur due to the fact that we consider a finite set of points and, therefore, any infinite step necessarily needs to be approximated via a steep, but finite step.

\begin{figure}
    \centering
    \includegraphics[width=.7\textwidth]{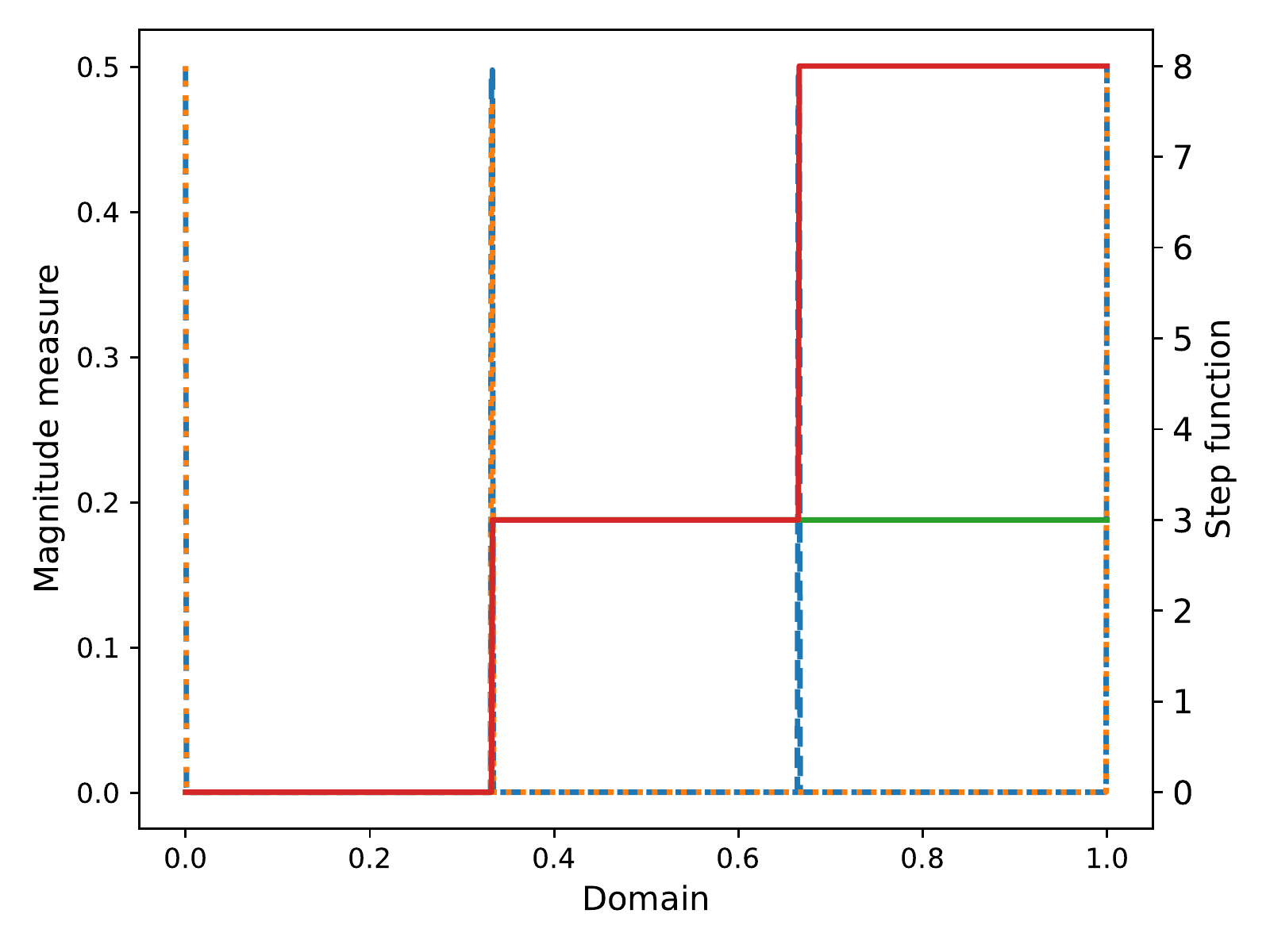}
    \caption{An illustration of the magnitude calculation of a two-channel, two-pixel, one-dimensional image. The solid lines represent the step functions, the dashed blue line is the numerical magnitude and the dotted orange line represents the theoretical magnitude.}
    \label{fig:theoretical}
\end{figure}

\begin{remark}
Note that one-dimensional images can also be viewed as time series, although, the magnitude measure ignores the arrow of time.
\end{remark}

\begin{remark}
Instead of step functions, we could have chosen a piece-wise linear interpolation between the pixels to create a continuous surface. The proofs for this construction (e.g. a piece-wise linear continuous curve in one-dimension) are analogous to the step function case. In fact, the $\ell_2$ curve-length is also tractable. However, piece-wise linear functions are not good models for images.
\end{remark}

\subsubsection{2D images}\label{subsec:2d_images}

Equipped with exact calculations of the magnitude measure of digitised images in one dimension, we now aim to generalise these results to two-dimensional images. 
We first extend Theorem \ref{thm:mag_line} to a bounded plane with $\ell_0$ (Hamming) and $\ell_1$ (Manhattan) metrics analogously.
\begin{theorem}
Let $\mu$ be the Lebesgue measure on the real line and $\nu(L_{[a,b]})$ be the magnitude measure on a line segment $L_{[a,b]}$. Then a magnitude measure on a bounded Plane $P_{[a,b]\times[c,d]}$ is given by
\begin{enumerate}[(i)]
    \item for $\ell_0$: $\nu(\bm{x}) = \tfrac{\mu}{(b-a)(d-c)}$,
    \item for $\ell_1$: $\nu(\bm{x}) = \nu(L_{[a,b]})\nu(L_{[c,d]})$.
\end{enumerate}
\label{thm:mag_plane}
\end{theorem}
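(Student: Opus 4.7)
The plan is to verify, in each case, the defining integral equation of a magnitude measure, namely
$$
\int_{\bm{x}\in P_{[a,b]\times[c,d]}} e^{-d(\bm{x},\bm{y})}\,d\nu(\bm{x}) = 1
$$
for every $\bm{y}$ in the bounded plane. Both items then follow by a direct calculation, reducing the two-dimensional problem to the one-dimensional magnitude measure already settled by Theorem \ref{thm:mag_line}.

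For item (ii) I would exploit the separability of the $\ell_1$ metric: since $d(\bm{x},\bm{y}) = |x_1 - y_1| + |x_2 - y_2|$, the exponential factorises as $e^{-|x_1 - y_1|}\,e^{-|x_2 - y_2|}$. The proposed measure $\nu = \nu(L_{[a,b]}) \otimes \nu(L_{[c,d]})$ is itself a product of finite signed Borel measures on compact intervals, so Fubini's theorem applies and the defining integral splits as
$$
\left(\int_{[a,b]} e^{-|x_1-y_1|}\,d\nu(L_{[a,b]})(x_1)\right)\cdot\left(\int_{[c,d]} e^{-|x_2-y_2|}\,d\nu(L_{[c,d]})(x_2)\right).
$$
By Theorem \ref{thm:mag_line}, each factor equals $1$, giving the result.

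For item (i) I would observe that under the $\ell_0$ (Hamming) metric the distance between $\bm{x}$ and $\bm{y}$ equals the number of coordinates in which they differ. The hyperplanes $\{x_1 = y_1\}$ and $\{x_2 = y_2\}$ each have two-dimensional Lebesgue measure zero, so at $\mu$-almost every point $\bm{x}$ one has $d(\bm{x},\bm{y}) = 2$. Since the proposed $\nu$ is absolutely continuous with respect to $\mu$, the integrand collapses to a constant almost everywhere, and the integral reduces to a straightforward evaluation of $\nu$ on the rectangle using $\nu(P) = \mu(P)/((b-a)(d-c)) = 1$, matching the normalisation convention for the Hamming metric with no coordinate coincidences on a set of positive measure.

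The main obstacle is really just bookkeeping: verifying that Fubini is legitimately applicable to the product of signed Borel measures in item (ii). This is fine because each of $\nu(L_{[a,b]})$ and $\nu(L_{[c,d]})$ has a Jordan decomposition into two finite positive measures on a compact interval; one applies Fubini to each of the four resulting product measures and recombines. No genuine analytic difficulty arises once the product structure of both the integrand and the measure is exposed; the Dirac masses at the four corners of the rectangle combine correctly under the product construction to handle the boundary contributions that were already present in the one-dimensional case.
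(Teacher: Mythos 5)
Your part~(ii) is essentially the paper's own argument: the $\ell_1$ distance on the rectangle splits as a sum, the exponential factorises, the product measure $\nu(L_{[a,b]})\otimes\nu(L_{[c,d]})$ lets the double integral split into two one-dimensional integrals, and each factor equals $1$ by Theorem~\ref{thm:mag_line}. Your care about Fubini is harmless but not really needed here, since $\nu(L_{[a,b]})=\tfrac12(\mu+\delta_a+\delta_b)$ is already a finite \emph{positive} measure on a compact interval; no Jordan decomposition is required. This part is correct and complete.

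Part~(i), however, does not close as written, and your own observation is what breaks it. You correctly note that for the Hamming metric $d(\bm{x},\bm{y})=2$ for $\mu$-almost every $\bm{x}$ (the exceptional set $\{x_1=y_1\}\cup\{x_2=y_2\}$ is Lebesgue-null). Since the proposed measure is absolutely continuous with respect to $\mu$, the defining integral then evaluates to $e^{-2}\,\nu(P)=e^{-2}$, not $1$; the phrase ``matching the normalisation convention'' does not supply the missing factor of $e^{2}$. To make the verification go through you must either rescale the measure to $e^{2}\mu/\bigl((b-a)(d-c)\bigr)$ or state explicitly a convention under which the integrand is $1$ almost everywhere. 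For comparison, the paper's proof asserts that the Hamming metric ``is only $\neq 0$ on a set of Lebesgue measure $0$'' and hence that the integrand equals $1$ a.e.\ --- which is incompatible with the standard reading of the Hamming distance that you use (the distance is nonzero off the diagonal, i.e.\ almost everywhere, and equals $2$ there). So your reading exposes a genuine discrepancy between the stated measure and the metric; as a proof of the theorem as stated, the step ``the integral reduces to \dots $=1$'' fails, and you need to either carry the constant $e^{-2}$ through and adjust the measure, or adopt and justify the convention the paper implicitly relies on.
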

\begin{proof}
Note that in both cases we can find a magnitude measure $\nu(\bm{x})$ such that the integral can be expressed as
\begin{equation}
    \int_{x\in [a,b]} e^{-d(x_1,x_1')}d\nu(x_1)\int_{x_2\in[c,d]}e^{-d(x_2,x_2')}d\nu(x_2) = 1, \tag{$\star$}
    \label{eq:proof1}
\end{equation}
for $x\in P_{[a,b]\times[c,d]}$.
For the Hamming metric we note that it is only $\neq 0$ on a set of Lebesgue measure $0$, namely $x_1=x_1'$ or $x_2=x_2'$ respectively. Therefore, the integrand equals $1$ and we can express the magnitude measure as $\nu(x_1) = \mu(x_1)/(a-b)$ and $\nu(x_2) = \mu(x_2)/(c-d)$. Substituting into \eqref{eq:proof1}, we obtain
$$
\frac{1}{(b-a)(d-c)}\int_{x\in [a,b]} d\mu(x_1)\int_{y\in[c,d]}d\mu(x_2) = 1,
$$
which completes the proof.
In the case of the Manhattan metric, we note that each integral equals the integral over a real line with magnitude measure $\nu(L_{[a,b]})$ and $\nu(L_{[c,d]})$ respectively.
\end{proof}

A more general result of Theorem \ref{thm:mag_plane} has also been obtained in \citep[Proposition 3.7]{leinster2017magnitude}. We illustrate the ramifications of Theorem \ref{thm:mag_plane} by generalising \ref{ex:line_image} to two-dimensional domains.

\begin{remark}
Interestingly, it seems that the Hamming distance does not suffer from boundary effects, however, due to the nature of this distance one also reduces the information carried by the metric. Furthermore, the Hamming distance is not robust to noise, i.e. small perturbations in the pixel values may have large effects on the pixel distance.
\end{remark}

\begin{example}
\begin{figure}
    \centering
    \subfloat[A constant image with $\Phi(x) = 123$.]{\includegraphics[width=.3\textwidth]{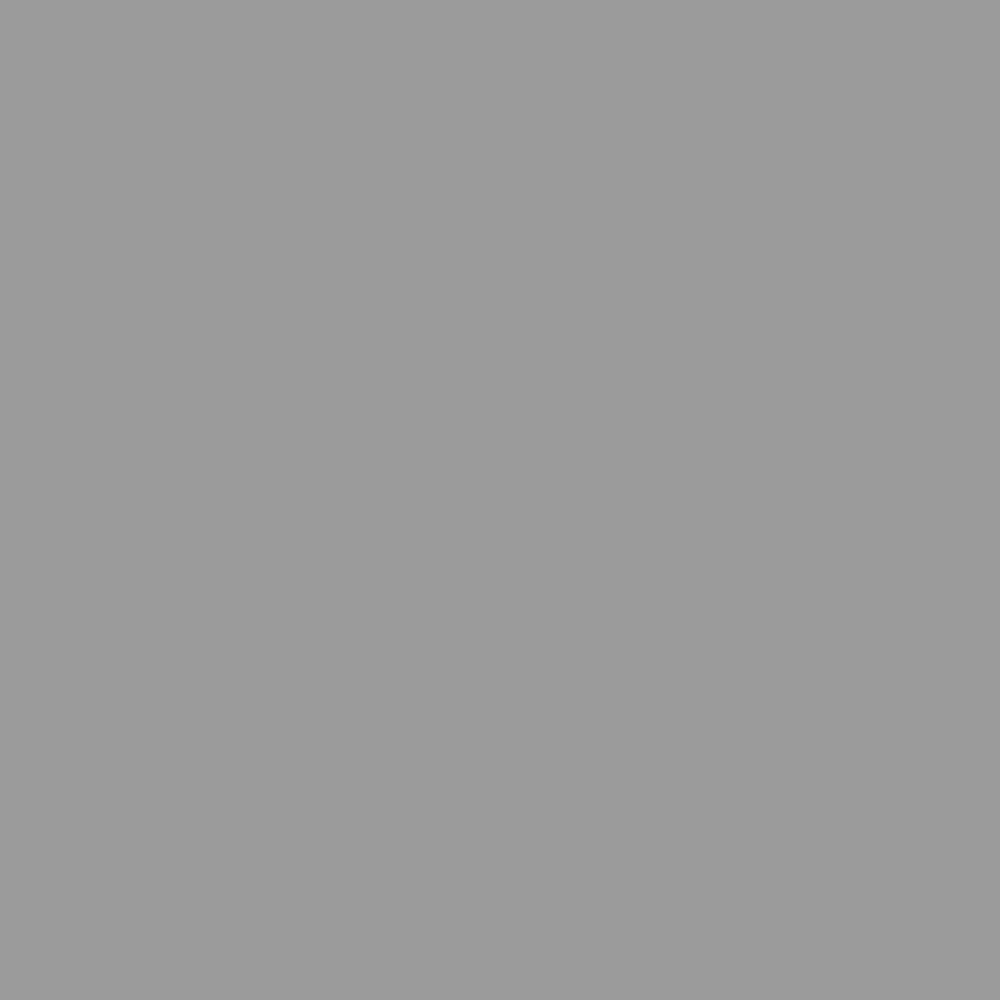}\label{subfig:const}}\hspace{5cm}%
    \subfloat[A line image with $\Phi(x) = x_1$.]{\includegraphics[width=.3\textwidth]{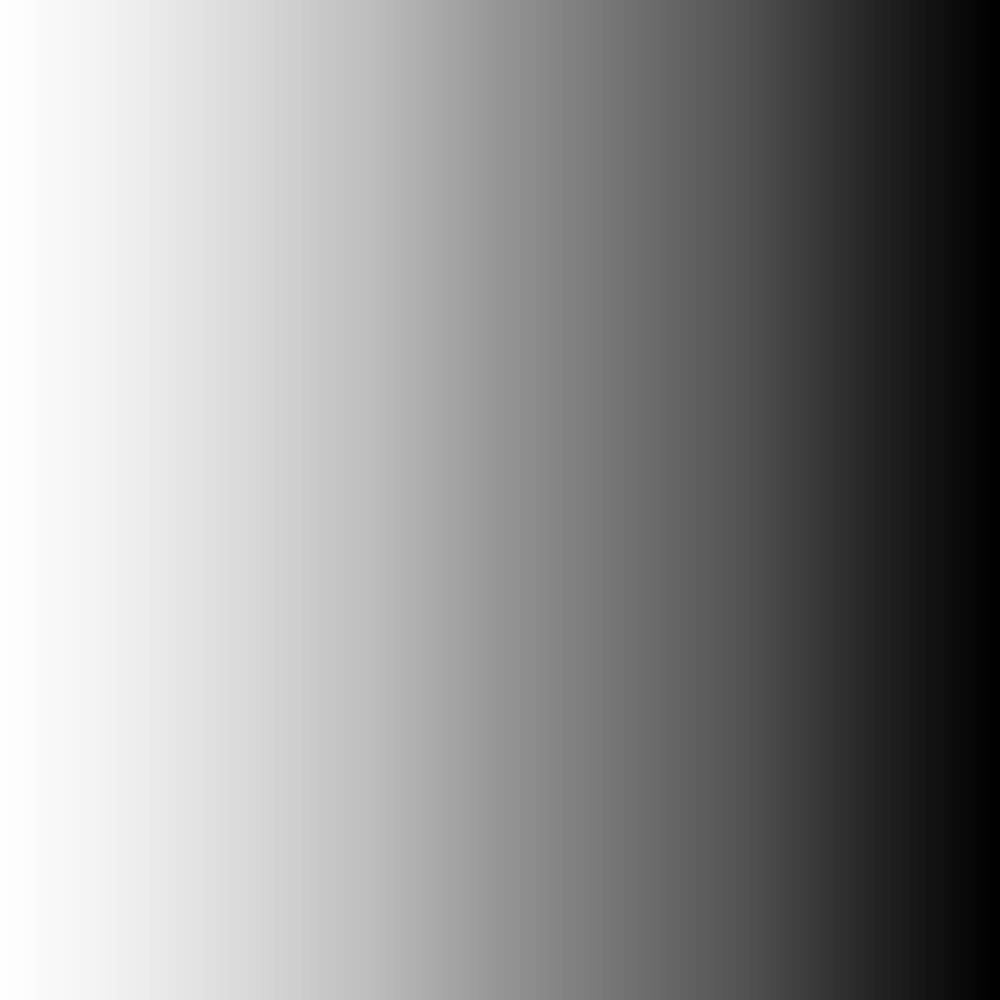}\label{subfig:line}}
    \caption{Two 1D images. The brightness channel is constant across at least one of the dimensions.}
    \label{fig:simple_images}
\end{figure}

Consider the $\ell_1$ metric, on an analogue image on a domain $D = [0,w]\times[0,h]$. 
We can derive the magnitude measure of
\begin{enumerate}[(i)]
    \item the constant image in Subfigure \ref{subfig:const} with $\Phi(x) = \text{const.}$,
    \item the single channel ($n=1$) line image in Subfigure \ref{subfig:line} with $\Phi(x) = \alpha x_1 + \text{const}$, where $\alpha\in\sR$.
\end{enumerate}
Note that the constant image is a special case of the line image with $\alpha=0$. Substituting the line image into equation \eqref{eq:modified_metrics_l_1}, we obtain
$$
 d((x,\Phi(x))^T,(x',\Phi(x'))^T) = |x-x'| + |\alpha||x_1-x_1'|.
$$
Therefore, equation \eqref{eq:analogue_image_general} becomes
\begin{equation*}
\int_{x\in D} e^{-(|\alpha|+1)|x_1-x_1'|-|x_2-x_2'|}d\nu(\bm{x}) = 1,
\end{equation*}
for all $x'\in D$.
Using Theorem \ref{thm:mag_plane}, we conclude that the magnitude measure for constant or line images is given by $\nu(X) = (|\alpha|+1)\nu(D)$.
Note that these results generalise straightforwardly to multi-channel images by letting, for example, $\Phi(x) = (\phi_1(x),\dots,\phi_n(x))^T = (\alpha_1x_1,\dots,\alpha_n x_n)^T$.
\end{example}

The obvious difficulty to obtain results for more general images is that the geodesic distance is not as straightforward to calculate. In fact, for general images, this is analytically intractable. A simplifying assumption we can make is to consider a rank$-1$ approximation of an image. In this case, the image is outer product of two one-dimensional (digitised) images. Therefore, we can use Theorem \ref{thm:mag_plane} and Theorem \ref{thm:1d_images} to derive the magnitude measure.

\begin{corollary}
Consider two one-dimensional images, $I_1 = (x_1,\phi_s(x_1))^T$, on domain $D_1 = [0,w]$ and $\phi_s(x_1) = \sum_{i=1}^{m-1}\gamma^{(1)}_i H(x_1-(i/m)w) + c^{(1)}$ and $I_2 = (x_2,\phi_s(x_2))^T$, on domain $D_2 = [0,h]$ and $\phi_s(x_2) = \sum_{i=1}^{m-1}\gamma^{(2)}_i H(x_2-(i/m)h) + c^{(2)}$. Let $I$ be the rank$-1$ two-dimensional image formed by $I = I_1 \otimes I_2$. Then $$\nu(x) = \nu^{(1)}(x_1)\nu^{(2)}(x_2),$$ where $\nu^{(i)}(x_i)$ is the one-dimensional magnitude measure of the image $I_i$.
\label{cor:rank_1_2d_images}
\end{corollary}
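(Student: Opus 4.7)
The plan is to follow the pattern of Theorem~\ref{thm:mag_plane}~(ii), where the magnitude measure on a plane factored into a product of 1D magnitude measures. I would substitute the candidate product measure $\nu = \nu^{(1)}\otimes\nu^{(2)}$ into the defining integral \eqref{eq:starting_point} and reduce to two independent 1D problems via Fubini. The key structural input is that, for the rank-1 construction $I = I_1\otimes I_2$, the modified $\ell_1$ metric on $I$ separates additively:
$$
d(\mathbf{x},\mathbf{x}') = d^{(1)}(x_1,x_1') + d^{(2)}(x_2,x_2'),
$$
where each $d^{(i)}$ is the 1D modified $\ell_1$ metric of \eqref{eq:modified_metrics_l_1} associated with $I_i$. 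Once this holds, the kernel factors as $e^{-d(\mathbf{x},\mathbf{x}')} = e^{-d^{(1)}(x_1,x_1')}\,e^{-d^{(2)}(x_2,x_2')}$.

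Given the factorization, the computation is short. Substituting $\nu = \nu^{(1)}\otimes\nu^{(2)}$ into \eqref{eq:analogue_image_general} and applying Fubini yields
$$
\int_D e^{-d(\mathbf{x},\mathbf{x}')}\,d\nu(\mathbf{x}) = \left(\int_{D_1} e^{-d^{(1)}(x_1,x_1')}\,d\nu^{(1)}(x_1)\right)\!\left(\int_{D_2} e^{-d^{(2)}(x_2,x_2')}\,d\nu^{(2)}(x_2)\right),
$$
and each bracketed factor equals $1$ by Theorem~\ref{thm:1d_images} applied to $I_1$ and $I_2$ respectively, uniformly in $x_1'\in D_1$ and $x_2'\in D_2$. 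Hence the product equals $1$ for every $\mathbf{x}'\in I$, showing that $\nu^{(1)}\otimes\nu^{(2)}$ satisfies the defining property of a magnitude measure on $I$.

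The main obstacle is step~1, rigorously justifying the additive decomposition of the modified $\ell_1$ metric on the outer product. The cleanest interpretation of $I_1\otimes I_2$ in this context is as a two-channel 2D image whose channel vector at $(x_1,x_2)$ is $(\phi_s^{(1)}(x_1),\phi_s^{(2)}(x_2))$: then the horizontal leg of any axis-aligned $\ell_1$-shortest path in the domain picks up only jumps of $\phi_s^{(1)}$, and the vertical leg picks up only jumps of $\phi_s^{(2)}$, giving the decomposition by splitting the contributions of \eqref{eq:modified_metrics_l_1} across the two coordinates. Under the alternative single-channel reading $\Phi(x_1,x_2)=\phi_s^{(1)}(x_1)\phi_s^{(2)}(x_2)$, jump amplitudes in one coordinate are rescaled by the value of the factor in the other coordinate, and additivity fails in general; the corollary should therefore be read with the multi-channel interpretation in mind, after which the remainder is exactly the 2D Fubini argument used in Theorem~\ref{thm:mag_plane}.
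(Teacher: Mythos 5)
Your proposal is correct and takes essentially the same route as the paper, whose proof is a one-line appeal to Theorem~\ref{thm:mag_plane}~(ii) and Theorem~\ref{thm:1d_images}; your Fubini factorisation of the kernel against the product measure $\nu^{(1)}\otimes\nu^{(2)}$ is precisely the argument being invoked there. Your closing observation is a genuine clarification the paper leaves implicit: the additive splitting $d(\mathbf{x},\mathbf{x}')=d^{(1)}(x_1,x_1')+d^{(2)}(x_2,x_2')$ holds under the two-channel reading of $I_1\otimes I_2$ but fails for the single-channel product $\Phi(x_1,x_2)=\phi_s^{(1)}(x_1)\,\phi_s^{(2)}(x_2)$, where jump amplitudes in one coordinate are modulated by the other, so the corollary should indeed be read in the multi-channel sense.
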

\begin{proof}
This follows from Theorem \ref{thm:mag_plane} (ii) and Theorem \ref{thm:1d_images}.
\end{proof}

Again, the above corollary can be straightforwardly generalised to colour images using Corollary \ref{cor:colour_images}.
We note that the rank-1 approximation does not hold for almost any image and, therefore we consider another approximation which we call the \emph{independence approximation}.

In the independence approximation we treat each pixel in a digitised image as if it were a pixel in a rank-$1$ image. That is, for a given location $(x_1'',x_2'')\in\sR^2$, we apply corollary \ref{cor:rank_1_2d_images} to obtain a local magnitude measure of the pixel $$\nu(x)|_{(x_1'',x_2'')} = \nu^{(1)}(x_1)|_{x_2''}\:\nu^{(2)}(x_2)|_{x_1''}.$$ Therefore, we only consider the step functions at the edges of each pixel to obtain a weight for the pixel. Even though, this is not a global magnitude measure, it is a reasonable approximation in practise (see Subsection \ref{subsec:benchmarks}) and, since it only relies on local pixel information, is computationally very efficient.

\subsection{Interpretation of the magnitude measure}

In the previous sections we calculated explicit magnitude measures for digitised images with $\ell_1$ metric. In this section, we interpret the meaning of these measures in the context of computer vision. First, we observe that the magnitude measure is a local property, that is, it only depends on the immediate neighbourhood of the point $\bm{x}$ in a domain $D$. This is holds exactly for one-dimensional images and we show, based on one-dimensional considerations, that this also holds at least approximately for two-dimensional images. These results can be used in constructing efficient algorithms when applying our results from \emph{digitised} images to \emph{digital} images. In Figure \ref{fig:theoretical} we empirically show that the values of the magnitude measure calculated from our formulae (based on compact metric spaces) are reproduced in the numerical calculations on finite metric spaces modulo some numerical ``discretisation effects''.

Next, we investigate a potential interpretation for the value of the magnitude measure of a digitised image. Note that in the absence of any steps, it is just a constant equal to half the Lebesgue measure (ignoring boundary effects). Analogously, in numerical experiments the constant is determined by the grid spacing. The magnitude measure has a value larger than this constant only when steps are present, in other words, when the pixel brightness changes. In computer vision a ``rapid change in pixel brightness'' is usually referred to as an \emph{edge} in an image and algorithms whose aim is to find edges in images are called ``edge detectors''. Therefore, we argue that computing the magnitude measure (or vector) of an image performs an edge detection task as it is large in the presence of an edge.

To determine how large a step needs to be in order to count as an ``edge'', we recall the definition of an exponential probability distribution. Let $Z\in[0,\infty)$ be a random variable and $\lambda > 0$. The exponential distribution is given by a probability density function (PDF)
\begin{equation}
    p_Z(z;\lambda) = \lambda e^{-\lambda z},
\end{equation}
with cumulative density function (CDF)
\begin{equation}
    P(z \leq Z;\lambda) = 1 - e^{-\lambda z}.
\end{equation}
If we let $\lambda = 1$ and $x = |\gamma_i|$, then we notice that the prefactor to the singular part of the magnitude measure at the step locus has the form of the CDF of an exponential distribution. In the case of two-dimensional images, we conjecture that a multivariate exponential distribution of the from $p(\bm{z}) \sim e^{-f(\bm{z})}$, where $f(\cdot)$ is any continuous function, needs to be considered. Both approximations we introduced in Subsection \ref{subsec:2d_images} can be considered as a probabilistic independence assumption, i.e. $p(\bm{z}) = p(z_1)p(z_2)$. Using this interpretation, we can consider any threshold for edges as the probability of a step being smaller than the threshold and in Subsection \ref{subsec:edge_detection}, we implement a magnitude-based edge detector. This application immediately leads to our main machine learning task of this paper.

\begin{enumerate}
    \item[\textbf{Question:}] Given that the $\ell_1$ magnitude vector has edge detection capabilities, can we \emph{learn} a metric which improves the edge detection of the magnitude vector?
\end{enumerate}

To answer this question, we first need to investigate efficient ways of calculating the magnitude vector of an image.


\section{Speedup Algorithms and Learnable Metrics}\label{sec:applications}

In this section, we introduce two important tools in order to make magnitude computations more accessible and applicable. The first is an algorithm to speed up magnitude vector calculations based on the reasoning of the previous sections. The second is a neural network architecture which serves as a few-shot deep-learning-based edge detector.

\subsection{Efficient approximations of the magnitude vector of images}

Although already mentioned, we briefly reiterate that a major speedup (at the cost of accuracy) follows directly from \ref{cor:rank_1_2d_images}. One can use a Fourier filter (which have efficient implementations in every major programming language) to calculate the step heights in the $x_1$- and $x_2$-directions and transform it to the approximate magnitude measure. 

The second potential speedup is described in Algorithm \ref{alg:speedup} and also uses the locality property of the magnitude measure. In particular, it is a divide-and-conquer algorithm, where the image is first split into several overlapping patches (to account for boundary effects). Then the magnitude vector is calculated via matrix inversion, the appropriate boundaries are removed and the resulting patches are stitched together again. Therefore, the run time is linear in the number of patches and cubic in the number of pixels in a patch, since matrix inversion is $\mathcal{O}(n^3)$ for a matrix of size $n$.
While theoretical intuition for the correctness of this algorithm is presented in the previous sections, we provide further empirical evidence and runtime comparisons in Subsection \ref{subsec:benchmarks}.

\begin{algorithm}[!ht]
\DontPrintSemicolon
  
  \KwInput{A digital image (img) tensor $(c\times h\times w)$, a metric $d$, a patch size $(h_p,w_p)$, an overlap $\delta$}
  \KwOutput{A magnitude vector as an $(h\times w)$ tensor.}
  \tcc{First split the image into $n$ overlapping patches}
  \textbf{zeroPad}(img): $(c\times h\times w)\to (c\times h+\delta\times w+\delta)$\\
  patch\textunderscore array = \textbf{patchImg}(img): $(c\times h+\delta\times w+\delta) \to (n \times c\times h_p+\delta\times w_p+\delta)$

  \tcc{Find magnitude vector of each patch}
  mag\textunderscore patches = []\\
  \For{patch in patch\textunderscore array}
  {
    zeta = \textbf{getSimilarityMatrix}(patch)\\
    zeta\textunderscore inverse = \textbf{invert}(zeta)\\
    mag\textunderscore vec = \textbf{sumCols}(zeta\textunderscore inverse)\\
    mag\textunderscore patch = \textbf{reshapeAndCropBoundary}(mag\textunderscore vec)\\
    \textbf{append} mag\textunderscore patch \textbf{to} mag\textunderscore patches
  }
  
  \tcc{Now stitch the patches}
  out = \textbf{stitchPatches}(mag\textunderscore patches)
  
  \Return{out}
\caption{Heuristic speedup}
\label{alg:speedup}
\end{algorithm}

\subsection{A pullback metric for edge detection}\label{subsec:metric_learning}

We now return to the machine learning question posed at the end of Section \ref{sec:theory}. Can we learn a metric to improve the edge detection of the magnitude vector? This task can be loosely placed in the subfield of machine learning called metric learning \citep{kaya2019deep}. Traditionally, metric learning is a supervised learning technique which tries to find a metric which minimises the distance between to related points (i.e. points that are in the same class) and maximises the distance between points which are unrelated (i.e. are in a different class). Many techniques for metric learning have been developed including triplet losses \citep{ge2018deep} and triplet networks \citep{hoffer2015deep}.

In the case of optimising the magnitude vector one can think of the label as a one-channel (grey scale or binary) image which labels the ground truth we are trying to approximate, e.g. a manually annotated edge map in an edge detection dataset. Although this ground truth image is a per-pixel labelling it is not straightforward to use it as a label for a point in the input metric space (recall, that the similarity matrix has to be inverted and summed). Therefore, ``classical'' metric learning techniques cannot be applied and an alternative route needs to be taken. First, we formulate the learning task. Given a set $B$, we want to find a metric $d$ such that
\begin{equation}
   \zeta_B^{-1}\mathbb{1} = \bm{y},
\end{equation}
where $\bm{y}$ is the ground truth label and $\zeta_B$ is the similarity matrix of Definition \ref{def:MagVec_finite}. Note that finding functions which are guaranteed to be metrics (in particular, ones that fulfil the triangle inequality) is a highly non-trivial task. Before considering this issue, we proceed by defining a loss function.
One possible loss function for this learning task is the $\ell_2$ loss or mean sqaured error (MSE) loss, 
\begin{equation}
L = (\zeta_B^{-1}\mathbb{1} - \bm{y})^2 .
\label{eq:MSE_Loss}
\end{equation}
Notice that calculating the MSE loss involves a matrix inversion whose computational cost is prohibitive for large images. A straightforward application of Algorithm \ref{alg:speedup} is also not possible as we have no theoretical guarantees that the learnt metric has the same locality properties as the $\ell_1$ metric. Therefore, we need to restrict the function classes which we approximate. A possible solution presents itself in the form of a pullback metric.
\begin{definition}[Pullback metric]
Let $X$ and $Y$ be two metric spaces and $f: X\to Y$ be an injective function. Let $d$ be a metric on $Y$, then the pullback metric is a metric on $X$ given by $$(f^* d)(\bm{x}^{(1)},\bm{x}^{(2)}) = d(f(\bm{x}^{(1)}),f(\bm{x}^{(2)})),$$ for $\bm{x}^{(1)},\bm{x}^{(2)}\in X$.
\label{def:pullback_metric}
\end{definition}

Suppose we let $Y$ be the original image metric space and $d$ is the $\ell_1$ metric. Then, we can immediately apply Algorithm \ref{alg:speedup} to make the loss computations tractable. The machine learning task also reduces to finding an injective function $f: Y\to X$ (usually referred to as an embedding) such that $f$ is injective. Observing that a function $f: X\to Y$ is injective if there exists a function $g: Y\to X$ such that for all $\bm{x}\in X$ $g(f(x))=\bm{x}$, we can parameterise the function $f$ by an autoencoder neural network. Fundamentally, there exist two different autoencoder architectures, compressive autoencoders which approximate a function $\sR^n\to\sR^m$ where $m<n$, and expansive autoencoders for which $m\geq n$. Most autoencoders studied in the literature are compressive autoencoders due to their favourable theoretical properties and their ability to perform non-linear dimensionality reduction. One well-known disadvantage of expansive autoencoders is the fact that without further constraints they can learn the identity function, i.e. $f(\bm{x}) = g(\bm{x}) = \bm{x}, \forall \bm{x}\in X$. Furthermore, in most machine learning applications one would like to \emph{reduce} the dimensionality of the data, not \emph{expand} it. In the case of metric learning in the magnitude setting, however, both properties of expansive autoencoders are favourable as:
\begin{enumerate}[(i)]
    \item The input dimensionality of the model is already low: $n+2$, where $n$ is the number of channels (usually $1$ or $3$).
    \item If the $\ell_1$ metric in the input space is the best metric, then we want our model to be able to learn the identity mapping.
    \item the MSE loss of \eqref{eq:MSE_Loss} is a natural regularizer for the latent space of the autoencoder.
\end{enumerate}

Taking these benefits into account, we design a magnitude edge-detector with an expansive autoencoder in the next section.

\section{Experiments}\label{sec:experiments}

In this section we perform experiments to validate our theoretical claims and investigate the power of the magnitude measure as an edge detector.

\subsection{Datasets}

For our experiments we used the BIPED dataset version 2 \citep{poma2020dense}. BIPEDv2 is a new benchmark dataset specifically designed for edge detection. BIPEDv2 consists of $250$ real-world images and annotated ground truths. The dataset is split into a training dataset of $200$ annotated images and a test dataset of $50$ annotated images. The resolution of the images is $1280\times 720$ which results in almost one million pixels per image. The ground truth annotations have been generated by computer vision experts and moderated by an administrator. The viability of the ground truths have also been confirmed using machine learning methods. 

\subsection{Accuracy of different magnitude approximations}\label{subsec:benchmarks}

The first set of experiments we performed were to empirically validate and investigate the various approximate methods for calculating the magnitude vector of images introduced in the paper. All of our calculations are performed on the test set of BIPEDv2. In order to generate a ground truth magnitude, we rescale the original image to a resolution of $200\times 200$. This resulted in needing to invert a $40 000 \times 40 000$-matrix, which is feasible on current machines. We then tested our three approximations, namely
\begin{enumerate}
    \item the rank-1 approximation,
    \item the independence approximation,
    \item and the patched Algorithm \ref{alg:speedup}.
\end{enumerate}

To evaluate the performance, we first min-max scale the ground truth and the approximated images such that the magnitude values are between zero and one. Then we calculate three performance metrics, namely the maximum absolute deviation between the ground truth and the approximated magnitude vector ($\ell_\infty$ distance), the normalised Fr\"obenius norm given by
\begin{equation}
    \mathrm{error} = \frac{\sum_i(\text{ground truth} - \text{magnitude vector}_\text{approx})^2}{\sum_i(\text{ground truth})^2},
\end{equation}
and the correlation between the two images. The results are presented in Figure \ref{fig:benchmark}

\begin{figure}
    \centering
    \subfloat[A benchmark of the approximation quality of the proposed algorithms]{\includegraphics[width=.5\textwidth]{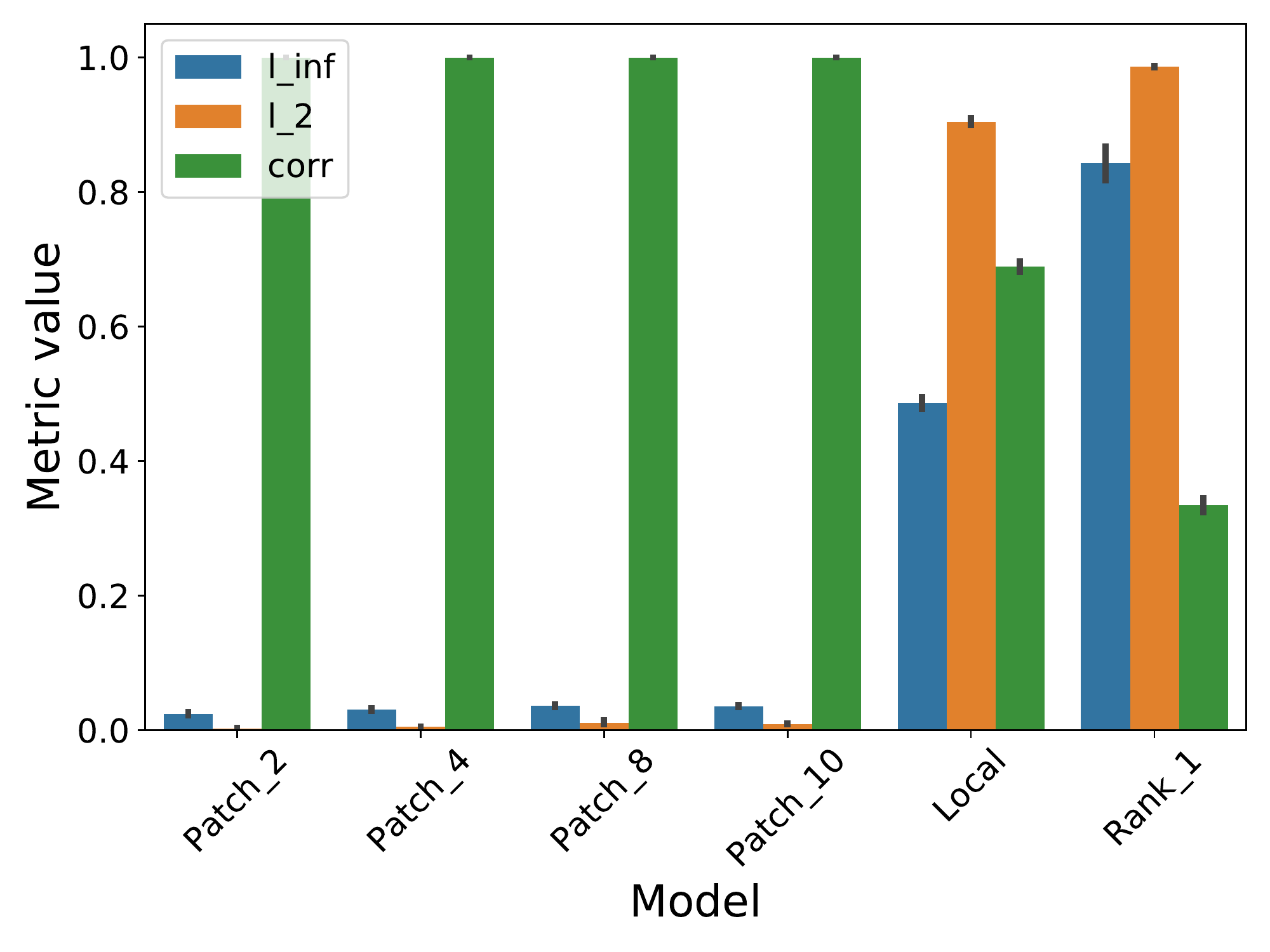}}
    \subfloat[The computation times of the ground truth calculation and the proposed patched algorithm]{\includegraphics[width=.5\textwidth]{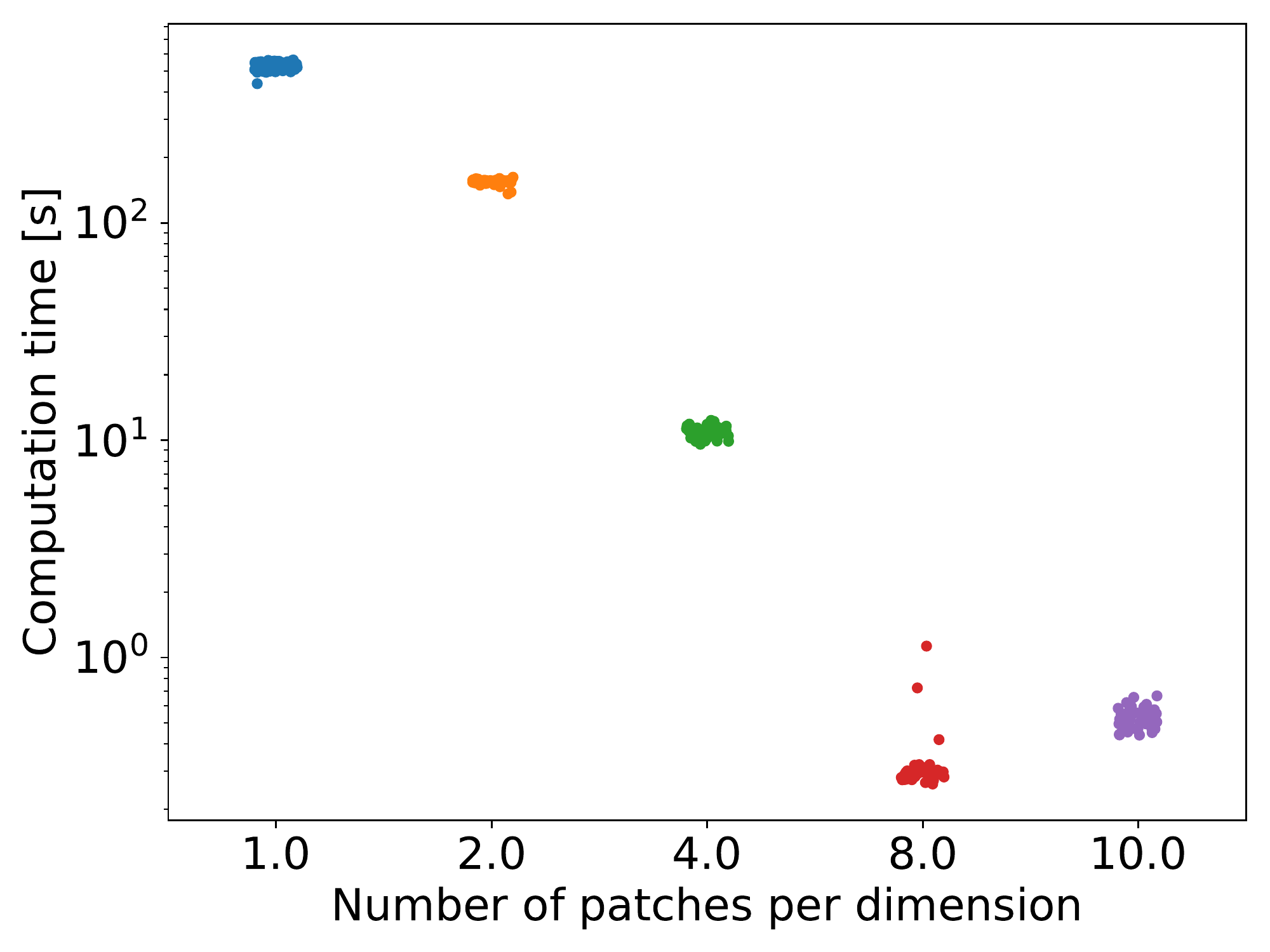}}
    \caption{Benchmark experiments performed on the $50$ test imaged of BIPEDv2. We test the computational speedup as well as the the approximation quality of Algorithm \ref{alg:speedup} and the rank-1 and local approximations outlined in Subsection \ref{subsec:2d_images}.}
    \label{fig:benchmark}
\end{figure}

As expected, the rank-1 approximation is by far the worst with a low average correlation between the ground truth and the approximated magnitude. A strong correlation is present in the local approximation, however, the maximum deviation and Fr\"obenius norm are still large compared to the patched approximations. The patched algorithm generally provides a good approximation to the ground trut with correlation values close to one and comparatively small absolute deviation and Fr\"obenius norm. The number of patches seems to have only a minor effect on the approximation accuracy, which is expected from the theoretical intuition behing Algorithm \ref{alg:speedup}. The computation time is drastically decreased by Algorithm \ref{alg:speedup}, with a smallest average computation time with patches of $25\times25$ pixels, resulting in $64$ patches in total, were considered. We attribute the increase in computation time for smaller patches to the computational overhead involved in processing a larger total number of patches.

\subsection{Learning the magnitude metric}\label{subsec:edge_detection}

Finally, we evaluate the capabilities of the magnitude vector as an edge detector. We compare our results against standard edge detection baselines such as the Sobel filter, the Canny edge detector, the ``vanilla magnitude'', the current state of the art deep learning models Dexined \citep{poma2020dense} and the context-aware tracing strategy (CATS)\citep{huan2021unmixing}.

\paragraph{Baselines:} The Sobel edge detector \citep{irwin1968isotropic} is a classical method used in computer vision. It relies on two convolution filters, the Sobel operators, to extract the gradients in the horizontal and vertical direction from the image. The edge maps, which correspond to an edge probability, are calculated by taking the absolute value of the gradient at each pixel. The first step to compute the Sobel edge map is to calculate a grey-scale image from the colour image using the formula
\begin{equation}
    c_\text{grey} = 0.2989c_\text{red} + 0.5870c_\text{green} + 0.1140c_\text{blue}.
\end{equation}
Then, we apply a Gaussian blur to the image and, finally, we use the Sobel operators with a given filter size.
As a postprocessing step we, again, use min-max scaling. The Gaussian and Sobel filter sizes are hyperparameters which can, in principle, be optimised. However, in our experiments we use a Gaussian filter size of $5$ and a Sobel filter size of $3$.

The Canny edge detector \citep{canny1986computational} builds on the Sobel filters and combines the gradient evaluation with a non-maximum suppression step, where the edge maps are sharpened, and a double-thresholding procedure to extract the edges. Unlike all the other methods considered here, the output of the Canny detector is a binary image, where pixel values of one correspond to edges and zeros correspond to non-edges. The thresholds and Sobel filter sizes are again hyperparameters which we optimise by requiring that the misclassification rate on the training set is minimised, i.e. the overlap of ones and zeros between the ground truth edge annotation and the Canny output is maximised.

The ``vanilla magnitude'' is simply the magnitude-transformed test image. Since an exact transformation is infeasible, we use Algorithm \ref{alg:speedup} to speed up the magnitude calculation. We use a Gaussian blur with filter size $5$ as a preprocessing and min-max scaling as a postprocessing step to obtain an edge probability map.

Both of the above algorithms are pixel-level methods which only use pixel intensities and their neighbours. Recently, deep learning methods for edge detection have also been developed. Most noteworthy are the Dexined \citep{poma2020dense} and the CATS \citep{huan2021unmixing} algorithms. Both rely on training a convolutional neural network in a supervised fashion, where the input is the image and the label is the edge map. What is special about these methods is that they can leverage information from the deeper layers in the neural network to obtain a better representation of the image. Since the magnitude vector calculation is also a pixel-level method we naturally expect the deep learning methods to outperform our approach.

\paragraph{Data preparation:} Since learning the magnitude vector on the full image is infeasible due to the prohibitive matrix inversion which needs to be performed, we leverage our proposed  Algorithm \ref{alg:speedup}. We first create patches from a given image and only learn the edge map on each patch of the image. Then, during testing, we transform each patch with the learned model and stitch the transformed patches together. We evaluate two scenarios:
\begin{enumerate}
    \item a random scenario, where we sample a random patch from each training image and
    \item a single-shot scenario, where we use all patches from one training image only as a training set.
\end{enumerate}
In each scenario, we set aside $20\%$ of the training patches as a validation set.

\begin{figure}[tb!]
\subfloat[The magnitude neural network module]{\includegraphics[width=\textwidth]{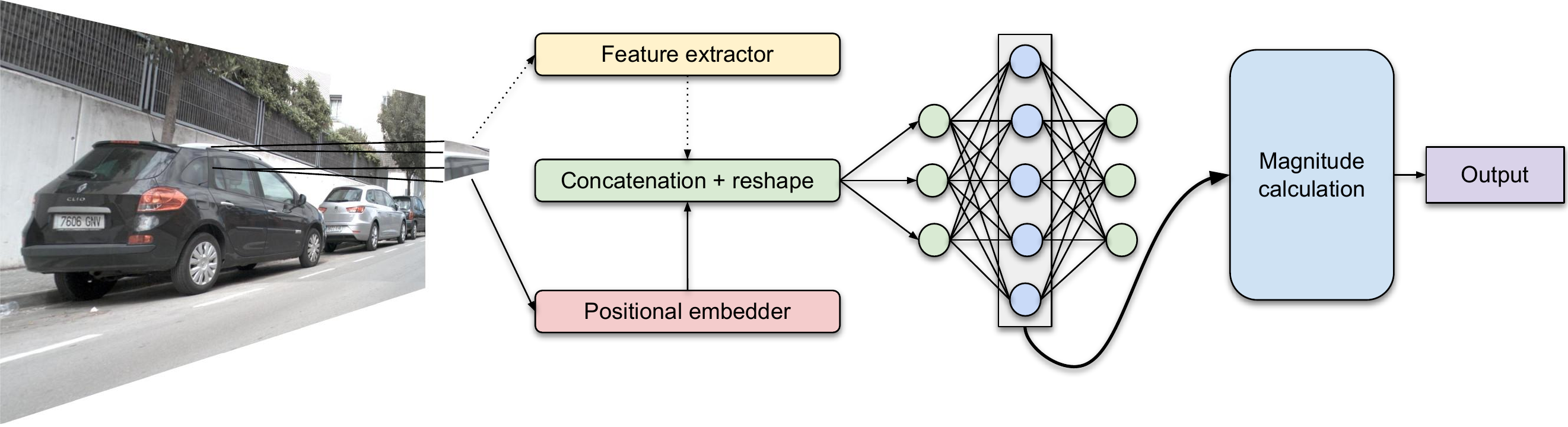}\label{subfig:edge_detector}}\\
\subfloat[The image transformer module]{\includegraphics[width=\textwidth]{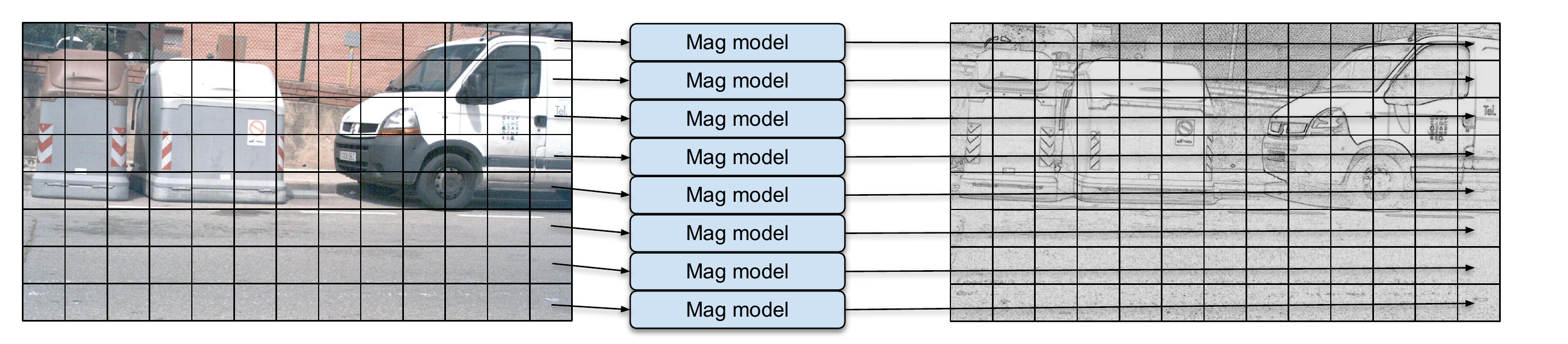}\label{subfig:image_transformer}}
\caption{A graphical overview of the magnitude edge detector. During training, we train the autoencoder (optionally the feature extractor) as presented in \ref{subfig:edge_detector} and during inference, we use the image transformer of \ref{subfig:image_transformer}}
\label{fig:edge_detection}
\end{figure}

\paragraph{Our model:} A graphical illustration of our edge detection model can be found in Figure \ref{fig:edge_detection}. Essentially, it consists of two parts, a trained model to find a good pixel embedding and an image transformer for inference. In the training step, we sample a patch from an image (with overlap to account for boundary effects) and reshape it such that we obtain a training batch of shape $(\#\text{pixels} \times \#\text{features})$. The features used are always the channel brightnesses ($= 3$ values: red, green, and blue) and the positional encoding as in Definition \ref{def:digital_image}. Additionally, we can create more features by using a ``feature extractor'' or backbone. In the forward step we feed this batch through the autoencdoer, as outlined in Subsection \ref{subsec:metric_learning}, and use the latent space representation to calculate the magnitude vector of the image patch. The image transformer transforms a full high-resolution image to a full-resolution magnitude vector using the generated latent-space representation of the image and Algorithm \ref{alg:speedup}. As postprocessing step we perform min-max scaling of the absolute values of the magnitude vector elements. The absolute value is taken, since numerical instabilities in the matrix inversion can lead to negative magnitude vector values.

\paragraph{Loss function:} As outlined in Subsection \ref{subsec:metric_learning} the loss function for our model is composed of the autoencoder loss and the magnitude loss discussed in \ref{subsec:metric_learning} as a regulariser for the autoencoder. In particular, for each patch we use the loss
\begin{equation}
    L = L_\text{AE} + \lambda L_\text{mag}.
\end{equation}
Here $$L_\text{AE} = \frac{1}{n}\sum_{i=1}^n (\bm{x}_1 - \hat{\bm{x}}_i)^2$$ is the autoencoder loss between a point $\bm{x}$ in the digital image and the reconstructed point $\hat{\bm{x}}$ and $n$ is the number of pixels in the patch. The term $L_\text{mag}$ is the loss between the magnitude-transformed patch and the ground truth edge map. Empirically we found, that the loss function $$L_\text{mag} = \frac{1}{|N_p|}\sum_{i\in N_p} (y_i-\hat{y}_i)^2 + \frac{1}{|N_n|}\sum_{i\in N_n} |y_i - \hat{y}_i|$$ is beneficial. The values $y \in [0,1]$ is a pixel in the ground-truth edge map and $\hat{y} \in \sR$ is a magnitude-transformed pixel and $N_p$ are the pixels for which $y=1$, whereas $N_n$ are pixels for which $y=0$. The distinction between edges ($y=1$) and non-edges ($y=0$) is necessary, since in each patch there are usually many more non-edges than edges and not discriminating between the two classes in the loss function results in inferior performance \citep{poma2020dense}.

During validation we use a modified magnitude loss, namely $$L_\text{mag} = \frac{1}{|N_p|}\sum_{i\in N_p} |y_i-\hat{y}_i| + \frac{1}{|N_n|}\sum_{i\in N_n} |y_i - \hat{y}_i|$$ to obtain a more fine-grained distinction bewteen the models.

\paragraph{Hyperparameters:} Naturally, there are many hyperparameters in our model that can be tuned such as the architecture of the autoencoder and the feature extractor, the patch size, the learning rate, the regularisation strength $\lambda$. In our experiments we use a learning rate of $0.001$, a patch size of $40\times 40$ and an overlap of $2$. The base metric is the $\ell_1$ metric as described in Subsection \ref{subsec:metric_learning}. We set the regularisation strength ($\lambda)$) to one, since the autoencoder loss always approaches $0$ and after a few epochs only the magnitude loss will be optimised. It is also not necessary to use weight decay or dropout. We do employ early stopping, however, we use model checkpointing which selects the model with the smallest validation loss for inference. We trained the model for $100$ epochs in the random scenario and for $50$ epochs in the single-shot scenario.

We study several different autoencoder architectures and also include a convolutional feature extractor. No hyperparamter tuning has been performed due to the difficulty of rigorous evaluation of model performance on the entire image. In particular, we propose three architectures:
\begin{itemize}
    \item Model I consists of a linear autoencoder with one layer transforming the initial $5$ features to $10$ features in the latent space. No feature extractor is used.
    \item Model II consits of a non-linear autoencoder with layer sizes $10,20$ and $40$ and a rectified linear unit (ReLU) activation function. Again, no feature extractor is used.
    \item Model III consists of a convolutional feature extractor with a single convolution layer with filter size $3$ transforming the $3$-channel input patch to a $15$-channel output. The resulting $20$ features are transformed using a linear autoencoder with one layer into a $40$ dimensional latent space.
\end{itemize}

\paragraph{Evaluation:} To rigorously evaluate our model performance, there are, in principle, two strategies; an indirect method, were the edge detection is evaluated via some downstream computer vision task, or a direct strategy, where the computed edge maps are compared directly to the ground truth annotation. We adopt the latter strategy in order to gain insight into the main advantages and disadvantages of the magnitude approach. To this end, we use four evaluation metrics commonly used in edge detection tasks \citep{poma2020dense}, namely:
\begin{enumerate}
    \item the Optimal Dataset Scale (ODS), where one threshold separating edges/non-edges is calculated for the entire dataset.
    \item the Optimal Image Scale (OIS), where one threshold is calculated per image.
    \item the Average Precision (AP).
    \item the average recall at $50\%$ precision (R50).
\end{enumerate}

In order to calculate these measures, we use standard postprocessing tasks such as non-maximal suppression (NMS) and morphological thinning of the edge maps.

\begin{figure}
    \centering
    \includegraphics[width=\textwidth]{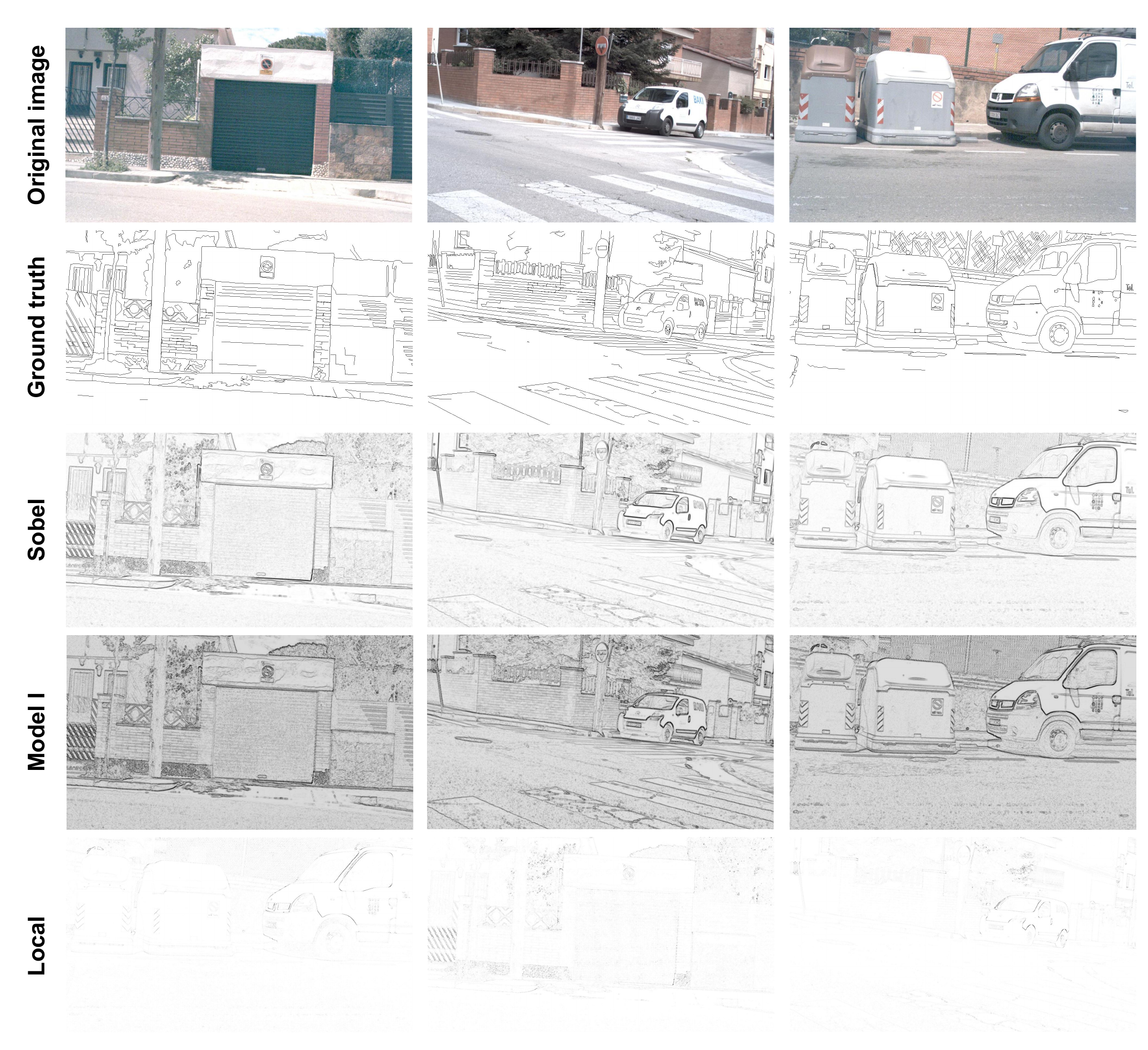}
    \caption{Example edge maps taken from the test set of BIPEDv2. We see that the ground truth annotation of the images not always exact and, therefore, any pixel-level evaluation should be taken with care. We compare the Sobel filter output, our best-performing magnitude model and the local approximation of the vanilla magnitude. The colours have been inverted for better visibility.}
    \label{fig:edge_maps}
\end{figure}

\paragraph{Topological properties of magnitude images:} Prior to performing a formalised investigation of the magnitude edge maps, we aim to quantify their structural---i.e.\ topological---properties. To this end, we leverage recent advances in computational topology and calculate the \emph{persistent homology} of the edge images that we obtain via the Sobel edge detector or our magnitude-based method. As shown by Hu et al.~\cite{Hu19a}, topological features are suitable to evaluate segmentations, for instance. Treating each image as a cubical complex~\cite{Rieck20}, we extract topological features in dimension~0~(connected components) and dimension~1~(cycles) of the resulting edge images; we summarise all features using the norm of their corresponding~\emph{Betti curve}~\cite{Rieck20c}, i.e.\ a simplified description of their topological complexity.
As Figure~\ref{fig:Magnitude_vs_Sobel} shows, the magnitude and Sobel edge images are structurally substantially different~(in terms of topological summary statistics).
Specifically, we see that magnitude exihibts a larger degree of computational complexity in terms of both connected components and cycles, thus underscoring the fact that Sobel edge images and magnitude edge images indeed \emph{capture qualitatively different structures}.

\begin{figure}[tbp]
    \centering
    \includegraphics{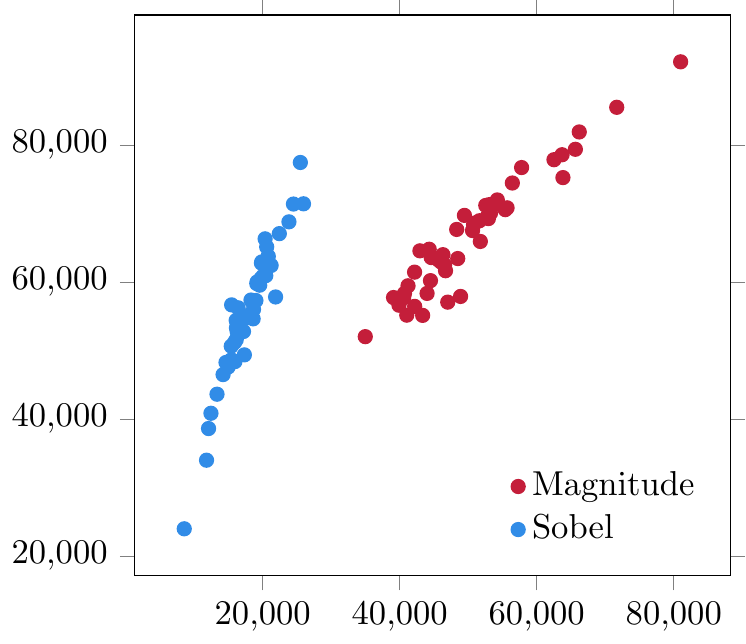}
    \caption{%
        Topological complexity in terms of the norm of Betti curves for magnitude~(red) and Sobel edge images~(blue).
    }
    \label{fig:Magnitude_vs_Sobel}
\end{figure}

\paragraph{Results:} The numerical results are presented in Table \ref{tab:results_ed} with example edge maps given in Figure \ref{fig:edge_maps}. As can be seen, the best magnitude model (Model I) is on par with the Sobel edge detector, potentially performing only slightly worse. This is surprising as, unlike the Sobel method, the magnitude is not a purpose-built edge detector. The Canny edge detector also did not perform well according to our evaluation metrics. However, this can be explained by the fact the Canny edge detector does \emph{not} provide a probabilistic edge map; rather it outputs a binary image with an already-implied hard decision boundary between edges and non-edges. Models II and III performed worst among all magnitude-based models. This could be attributed to the limited number of training examples used. The vanilla magnitude performed surprisingly well, although slightly worse than the optimised magnitude models. There seemed to be very little difference between the random and single-shot scenarios hinting at the fact that single images provides enough diversity to obtain a good latent space embedding. As expected, all pixel-level methods performed worse than the modern convolutional neural networks. This can be explained by the fact that more global information is available to these models. One noteworthy aspect is that the current model embeds single pixels into a latent space, therefore fine-tuning the distance between the pixels, but not taking into account more global information. This could be optimised in future versions of the magnitude edge detector. We also reiterate that \emph{no} rigorous optimisation of the autoencoder or feature-extractor architecture was performed. The results provide a clear indication that careful optimisation of these parameters can lead to substantial performance increases (or decreases).

Qualitatively, the Sobel and magnitude edge maps are very similar, however, the magnitude edge maps are usually slightly darker. This could also be attributed to our postprocessing steps, in particular, taking the absolute value of the magnitude vector elements before min-max scaling. Some details are lost in the local approximation, however, strong edges are still present.

\begin{table}
\centering
\begin{tabular}{lllll}
\hline
\textbf{Model}                            & \textbf{ODS}                              & \textbf{OIS}                              & \textbf{AP}                               & \textbf{R50}                               \\
\hline
Sobel                            & 0.766                            & 0.787                            & 0.842                            & 0.962                             \\
Canny                            & 0.458                            & 0.458                            & 0.0                              & 0.306                             \\
Vanilla Magnitude                & 0.706                            & 0.773                            & 0.780                            & 0.889                             \\
Model I                          & 0.739$\pm$0.013 & 0.761$\pm$0.011 & 0.805$\pm$0.012 & 0.938$\pm$0.010  \\
Model II                         & 0.587$\pm$0.106 & 0.665$\pm$0.084 & 0.534$\pm$0.256 & 0.621$\pm$0.300  \\
Model III                        & 0.733$\pm$0.016 & 0.758$\pm$0.014 & 0.775$\pm$0.021 & 0.946$\pm$0.010  \\
Model I Single Shot              & 0.737$\pm$0.013 & 0.758$\pm$0.012 & 0.804$\pm$0.013 & 0.938$\pm$0.010  \\
Model II Single Shot             & 0.683$\pm$0.038 & 0.707$\pm$0.036 & 0.752$\pm$0.039 & 0.859$\pm$0.062  \\
Model III Single Shot            & 0.730$\pm$0.020 & 0.754$\pm$0.017 & 0.771$\pm$0.033 & 0.941$\pm$0.011  \\
Dexined \citep{poma2020dense} & 0.895                            & 0.900                            & 0.927                            & N/A                               \\
CATS \citep{poma2020dense}    & 0.887                            & 0.892                            & 0.817                            & N/A\\
\hline
\end{tabular}
\caption{The edge detection performance of our magnitude model and baselines.}
\label{tab:results_ed}
\end{table}

\section{Conclusion}\label{sec:discussion}

In this paper, we introduced the magnitude vector of images. We started by outlining three different theoretical models for images and explained how they relate to each other. We then proceeded to show how the magnitude measure could be obtained for each of these image models and proved some foundational results.

The main theoretical contribution of this paper consists of explicitly deriving the magnitude measures for one-dimensional digitised images with $\ell_1$ metric. Specifically, we showed that the magnitude measure is mostly constant throughout the image, except at the step locations of step functions, where it is singular~(with a measure corresponding to the CDF of an exponential distribution). This allowed to theoretically motivate the ability of the magnitude measure to perform edge detection.
We also considered two-dimensional images. However, due to the analytical intractability, approximation strategies were introduced.
Based on these analytical results, we developed a patched speedup algorithm, which makes the magnitude vector calculation of high-resolution images feasible. We also considered refinements to the metric by introducing the notion of a pullback metric.

We performed a number of experiments validating our theoretical results, in particular, we empirically showed the validity of the patched magnitude calculation. In the final part of this paper, we presented results on the edge-detection capabilities of the magnitude vector with and without trained latent-space embeddings. The results of the experiments are promising and we found that the magnitude edge detector is approximately on par with the popular Sobel method, while still exhibiting substantially different topological, i.e.\ connectivity, properties.

These proof-of-principle experimental results open up a number of avenues for future research, both theoretically as well as experimentally. Major theoretical advances could consist of finding better approximations for the magnitude calculation in order to circumvent the matrix inversion step. Future experimental research could be directed towards finding better feature extractors or alternative metric learning procedures. In particular, it would be beneficial to be able to harness non-local pixel information.

\section*{Code and Data availability}

The code for our models can be found at \url{https://github.com/MikeAdamer/mag-metric}. The BIPED dataset is a public dataset and can be downloaded from \url{https://github.com/xavysp/MBIPED}.

\section*{Acknowledgements}

M.F.A.\ would like to thank Karsten Borgwardt for supporting this work.

\section*{Funding}

This project was funded in part by the Alfried Krupp
Prize for Young University Teachers of the Alfried Krupp von Bohlen und Halbach-Stiftung (M.F.A. via Karsten Borgwardt)

\section*{Conflict of interest}

There are no conflicts of interest.

\bibliography{main}
\bibliographystyle{unsrt}

\appendix

\end{document}